%% file: main.tex
\pdfoutput=1
\documentclass[11pt]{article}
\usepackage[T1]{fontenc}
\usepackage[utf8]{inputenc}
\usepackage{lmodern}
\usepackage{times}
\usepackage{microtype}
\usepackage{preprint}
\usepackage{amsmath,amssymb,amsthm,mathtools}
\usepackage{booktabs,array}
\usepackage[inline]{enumitem}
\usepackage{graphicx}
\usepackage{xcolor}
\usepackage{tikz,pgfplots}
\pgfplotsset{compat=1.17}
\usetikzlibrary{arrows.meta,positioning,calc}
\definecolor{skmu}{HTML}{17365D}
\definecolor{sknu}{HTML}{167D8D}
\definecolor{skacc}{HTML}{C56923}
\usepackage[round]{natbib}

\PassOptionsToPackage{hyphens}{url}\usepackage{hyperref}
\usepackage[nameinlink,noabbrev,capitalize]{cleveref}
\definecolor{linkblue}{RGB}{0,64,160}
\hypersetup{colorlinks=true,linkcolor=linkblue,citecolor=linkblue,urlcolor=linkblue}

\newtheorem{theorem}{Theorem}
\newtheorem{proposition}[theorem]{Proposition}
\newtheorem{corollary}[theorem]{Corollary}
\newtheorem{lemma}[theorem]{Lemma}
\theoremstyle{definition}
\newtheorem{definition}[theorem]{Definition}
\newtheorem{example}[theorem]{Example}
\theoremstyle{remark}

\newcommand{\R}{\mathbb{R}}
\newcommand{\E}{\mathbb{E}}
\newcommand{\Sph}{\mathbb{S}^{d-1}}
\newcommand{\Pcal}{\mathcal{P}}
\newcommand{\Scal}{\mathcal{S}}
\newcommand{\Ccal}{\mathcal{C}}
\newcommand{\Var}{\operatorname{Var}}
\newcommand{\supp}{\operatorname{supp}}
\newcommand{\SW}{\mathrm{SW}}
\newcommand{\norm}[1]{\lVert #1\rVert}
\newcommand{\abs}[1]{\lvert #1\rvert}
\DeclareMathOperator*{\argmin}{arg\,min}

\begin{document}

\title{\textbf{Sliced Wasserstein Sketches with Error Certificates}}
\author[1]{Elaheh Akbari}
\author[1]{James Liu}
\author[2]{ Keaton Hamm}
\author[1]{Soheil Kolouri}

\affil[1]{College of Connected Computing, Vanderbilt University}
\affil[2]{Department of Mathematics, Division of Data Science, University of Texas at Arlington}
\affil[ ]{\vspace{0.4em} \texttt{elaheh.akbari@vanderbilt.edu}}

\date{}
\maketitle

\begin{abstract}
Compact representations that support geometrically meaningful comparisons are valuable for learning from probability distributions. We study optimal, free-weight $M$-atom quantization of one-dimensional probability measures in quadratic Wasserstein distance and extend it to higher dimensions via slicing. Using $L$ shared projections yields a sketch with $O(LM)$ comparison time. Our main result is an exact decomposition of squared distance error into residual and partition-mismatch terms, yielding computable certificates for arbitrary centroid sketches. For optimal sketches, a finite $(2+\delta)$ moment for some $\delta>0$ guarantees $O(M^{-1})$ error in the finite-direction sliced distance, while equal-mass sketches can converge more slowly. Each slice preserves its mean, while storing residual energies enables covariance recovery from sufficiently many generic directions. Experiments on document, point-cloud, and flow-cytometry benchmarks show that optimal sketches substantially reduce storage while preserving the classification performance of uncompressed sliced distances. They also improve generalization from few labelled subjects and support end-to-end learning as compact pooling layers.
\end{abstract}

\section{Introduction}\label{sec:intro}

\begin{figure}[t!]
\centering
\begin{minipage}[c]{0.5\textwidth}
\centering
\includegraphics[width=\linewidth]{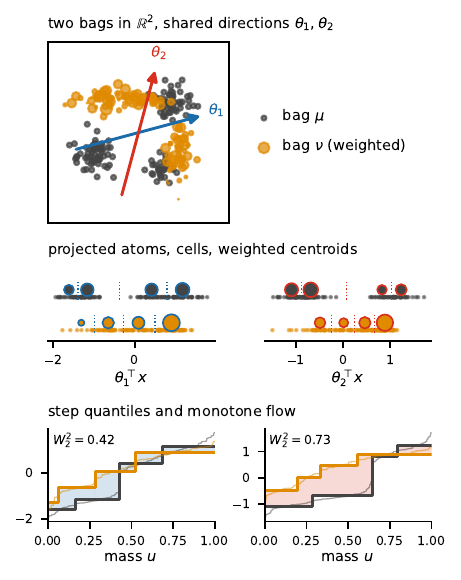}
\end{minipage}\hfill
\begin{minipage}[c]{0.46\textwidth}
\caption{Sketch construction. Top: two bags in $\R^2$ (one weighted) and two shared directions. Middle: projected atoms, $M=4$ optimal contiguous cells (dotted), and weighted centroids per direction; marker area is proportional to cell mass, and cells differ between bags. Bottom: sketch step quantiles over mass $u$, with original quantiles in the background and shading marking coupled pairs at each mass level. This monotone coupling has quadratic cost $W_2^2(\hat\mu_\ell,\hat\nu_\ell)$; the squared sketch distance $D_{\Theta,M}^2$ averages these costs over directions.}
\label{fig:construction}
\end{minipage}
\vspace{-0.8\baselineskip}
\end{figure}

A document represented by word embeddings and a shape represented by a point cloud are naturally compared as probability distributions. In retrieval, clustering, and other tasks involving many such objects, repeatedly computing transport distances on the original samples can dominate the cost. A useful summary should therefore be built once, occupy little memory, support many cheap comparisons, and quantify how far those comparisons can deviate from the true distances.

Sliced Wasserstein distances, introduced by \citet{rabin2012}, compare distributions by averaging one-dimensional transport costs over projection directions. In one dimension, quadratic Wasserstein distance is the $L^2$ distance between quantile functions. This representation underlies sketches based on quantile grids \citep{kolouri2015radon,naderializadeh2021,lu2024,carpinteroperez2024}, streaming rank summaries \citep{nguyen2025streamsw}, and parametric fits \citep{nadjahi2021,liu2025emperor}. A fixed grid of quantile levels is particularly convenient because comparison reduces to a Euclidean distance. However, its allocation of resolution is less flexible because cells of equal probability can cover very different spatial scales, and tail cells or cells crossing gaps in the support can dominate the approximation error.

In this work, we instead summarize each projection by the measure supported on at most $M$ atoms that minimizes its quadratic Wasserstein error. Both atom locations and weights are free to adapt to the projected distribution. For a weighted point measure, this is exactly one-dimensional weighted $k$-means, which admits a globally optimal dynamic program \citep{wu1991,gronlund2017}. Applying it on $L$ shared directions gives a reusable sketch with $O(LM)$ storage and comparison time. The construction also returns the quantization error of every slice, and it is exact on any projection with at most $M$ distinct values. Figure~\ref{fig:construction} illustrates the construction and comparison.

The central question is how these individually optimal summaries affect the distance \emph{between} two distributions. With a shared partition of the quantile domain, replacing each quantile by its cell means is an orthogonal projection and contracts $W_2$. Adaptive quantizers generally choose different partitions, so this contraction argument no longer applies. The triangle inequality still bounds distance error by the sum of the two quantization errors, but does not describe the effect of the mismatched cells. We resolve this gap with an exact identity for arbitrary \emph{centroid sketches}: summaries whose atoms are the means of their quantile cells. The identity isolates the boundary terms that can expand distances and yields sketch-computable certificates.

Our contributions are:
\begin{itemize}[leftmargin=*,itemsep=1pt,topsep=2pt]
\item \textbf{Distance certificates for adaptive partitions.} The partition-mismatch identity (\cref{thm:identity}) yields two-sided bounds, shared-partition contraction, and $O(M^{-2})$ squared-distance error when quantiles are Lipschitz and cell masses are $O(M^{-1})$ (\cref{cor:certificates,cor:lipschitz}). An explicit family shows that a larger $M^{-3/2}$ error is possible for general centroid sketches (\cref{ex:sharp}).
\item \textbf{Sliced approximation rates.} Combining the certificates with classical quantization bounds gives $O(M^{-1})$ error in the finite-direction sliced distance under a finite $(2+\delta)$ moment for some $\delta>0$ (\cref{thm:sliced,thm:rate}). Gaussian projections and projections of uniform balls in dimensions $d\ge3$ exhibit slower equal-mass rates, illustrating the benefit of adaptive weights (\cref{prop:equalmass}).
\item \textbf{Moment recovery.} Classical centroid identities give exact directional means and a second-moment deficit equal to the residual energy. We use these identities to recover means and covariances from finitely many directions, and to bound higher-moment tensor errors, with a per-cell correction that improves the rate (\cref{prop:moments}).
\item \textbf{Compression at matched accuracy.} On document, point-cloud, and flow-cytometry benchmarks the sketch reproduces the accuracy of the uncompressed sliced distance with $1.5$ to more than $3000$ times fewer stored numbers, and learned slices make the sketched distance competitive with full-dimensional transport (\cref{sec:experiments}). We also report where equal-mass cells are the better centroid sketch (\cref{app:sweeps}).
\end{itemize}
The exact solver and underlying quantization and moment identities are established results. Our analysis concerns their use in distribution sketches and, particularly, the effect of mismatched partitions on pairwise distances. The target is the sliced distance on the chosen directions; approximation of population sliced Wasserstein and comparison with $W_2$ are separate questions \citep{nadjahi2020,bonnotte2013}.

\section{Related Work}\label{sec:related}

\noindent\textbf{Exact one-dimensional quantization.}
Free-weight $M$-atom $W_2$ quantization of a weighted point measure is equivalent to one-dimensional weighted $k$-means with contiguous clusters. An early dynamic program for scalar quantization is due to \citet{bruce1964}; \citet{wu1991} obtained $O(MN)$ time after sorting using matrix searching \citep{aggarwal1987}. \citet{gronlund2017} show how to achieve $O(MN)$ time after sorting using only $O(N)$ space, and also describe a divide-and-conquer alternative with $O(MN\log N)$ running time. Lloyd--Max iterations provide an alternative to dynamic programming by repeatedly updating cell boundaries and centroids \citep{max1960,lloyd1982}. Whether they recover a global optimum depends on the distribution. \citet{fleischer1964} studied sufficient conditions for optimality, and subsequent work showed that log-concave densities admit a unique optimal quantizer to which the iterations converge \citep{trushkin1982,kieffer1983}. For more general distributions, however, cell boundaries and centroids can stabilize at a suboptimal partition.

\noindent\textbf{Quantization theory.}
Classical centroid stationarity \citep{max1960,lloyd1982} yields the numerical-integration bound $\tfrac12\norm{f''}_\infty\varepsilon_\mu^2$ for residual energy $\varepsilon_\mu^2$ \citep{pages1998,pages2003}, with extensions by \citet{lemaire2020}. We use Zador's asymptotic formula \citep{zador1963,zador1982} in its general formulation \citep{grafLuschgy2000} and the extended Pierce bound \citep{pierce1970,luschgy2008}. Separate theories address equal-mass rates \citep{giles2019,xu2019,bencheikh2022} and empirical-quantizer consistency and stability \citep{bartlett1998,levrard2015,liu2020}. Gauss quadrature matches polynomial moments \citep[Section~3.5(v)]{dlmf}, with nodes and weights computable by an eigenvalue algorithm \citep{golub1969}. %

\noindent\textbf{Sliced transport and compressed comparisons.}
Sliced transport \citep{rabin2012} led to sliced and Radon barycenters \citep{bonneel2015}, analytic comparisons with full-dimensional Wasserstein distances \citep{bonnotte2013}, and statistical and projection-sampling guarantees \citep{nadjahi2020,nietert2022,nguyen2024,tran2026}. Closest to our analysis are Gaussian projection summaries \citep{nadjahi2021}, transport between $k$-means summaries \citep{beugnot2021}, common-codebook quantization \citep{gachon2025}, and Voronoi and other structured approximations of individual measures \citep{hamm2026structured}. These approaches control individual approximation errors, yielding pairwise-distance bounds through the triangle inequality. Our identity instead resolves residual and boundary contributions for centroid sketches with different mass partitions.

\noindent\textbf{Distribution sketches and embeddings.}
Stream-SW \citep{nguyen2025streamsw} combines shared directions, KLL rank sketches \citep{karnin2016}, and exact one-dimensional transport, supporting streaming updates with probabilistic bounds involving the support diameter. We instead solve offline quantization and record its distortion. Sliced Wasserstein kernels and continuous feature maps \citep{kolouri2016} precede finite quantile-grid embeddings with Euclidean comparison \citep{naderializadeh2021,lu2024,carpinteroperez2024}; the cumulative distribution transform uses transport maps relative to a reference \citep{park2018}. Equal-mass cell means minimize quadratic transport error under prescribed equal weights \citep{xu2019,bencheikh2022}. Related summaries include midpoint grids and smooth piecewise-linear reconstruction for federated statistics \citep{charpentier2026}, geometric coresets using one-dimensional clustering \citep{harpeled2007}, and full-dimensional uniform-weight Wasserstein coresets \citep{claici2020}. Directional moment reconstruction is polynomial recovery \citep{bodmann2018}, combined with GMM summaries and least squares by \citet{liu2025emperor}; it differs from identifying an entire law through projections \citep{cramerwold1936,cuestaalbertos2007}.

\section{$W_2$-Optimal Sketches}\label{sec:method}

\noindent\textbf{Notation.}
Let $\Pcal_2(\R^d)$ denote probability measures with finite second moment. For $\mu\in\Pcal_2(\R)$ let $F_\mu$ be its distribution function and $Q_\mu(t)=\inf\{x:F_\mu(x)\ge t\}$, $t\in(0,1)$, its quantile function, so that $Q_\mu\in L^2(0,1)$ is nondecreasing and, for $\mu,\nu\in\Pcal_2(\R)$,
\begin{equation}\label{eq:W2quantile}
 W_2^2(\mu,\nu)=\int_0^1\bigl(Q_\mu(t)-Q_\nu(t)\bigr)^2\,dt=\norm{Q_\mu-Q_\nu}^2 ,
\end{equation}
where $\norm{\cdot}$ and $\langle\cdot,\cdot\rangle$ denote the $L^2(0,1)$ norm and inner product. This is the classical monotone-coupling formula \citep{dallaglio1956}; see \citet[Chapter~2]{santambrogio2015} for a modern treatment.

\subsection{One-dimensional building block}
For $\mu\in\Pcal_2(\R)$ and $M\ge1$, we denote an \emph{optimal $M$-atom quantizer} by $\Ccal_M(\mu)$, with optimal error $\varepsilon_M^\star(\mu)$:
\begin{equation}\label{eq:QM}
\begin{aligned}
 \Ccal_M(\mu)&\in\argmin\{W_2^2(\mu,\eta):\ \abs{\supp\eta}\le M\},\\
 \varepsilon_M^\star(\mu)&=W_2\bigl(\mu,\Ccal_M(\mu)\bigr).
\end{aligned}
\end{equation}
Write $\Ccal_M(\mu)=\sum_{j=1}^K q_j\delta_{b_j}$, where $K\le M$, $q_j>0$, and $b_1<\dots<b_K$. Nearest-neighbor cells are intervals, separated by midpoints of adjacent atoms; a consistent assignment handles ties. Their preimages under $Q_\mu$ are consecutive mass intervals $I_j=(s_{j-1},s_j]$. Optimality gives the centroid condition
\[
 b_j=\frac1{q_j}\int_{I_j}Q_\mu(x)\,dx,\qquad q_j=s_j-s_{j-1}.
\]
For weighted input $\mu=\sum_{i=1}^N w_i\delta_{x_i}$, the problem is exactly weighted $k$-means on $\R$ after sorting and merging duplicate locations. Exact dynamic programming takes $O(MN)$ time after sorting using SMAWK to exploit the Monge cost structure \citep{wu1991,gronlund2017,song2020}. Our implementation uses the simpler $O(MN\log N)$ divide-and-conquer variant, retaining $O(MN)$ traceback storage per slice, and includes an $O(MN^2)$ oracle for verification (\cref{app:impl}). This gives a global optimum of the discrete objective, whereas Lloyd--Max iterations need additional assumptions for such a guarantee \citep{fleischer1964,trushkin1982,kieffer1983}. \Cref{fig:freevsequal} compares prescribed and optimized cell masses on a real projection.

\begin{figure}[t]
\centering
\begin{minipage}[c]{0.5\textwidth}
\centering
\resizebox{\linewidth}{!}{\input{figures/fig2_free_vs_equal.tex}}
\end{minipage}\hfill
\begin{minipage}[c]{0.46\textwidth}
\caption{Equal-mass and optimal cells ($M=2$) for one AML cytometry subject’s side-scatter channel ($10^4$ cells; \cref{sec:experiments}). Atoms are cell means of $Q_\mu$; the shaded residual has energy $\varepsilon^2$. At the equal-mass boundary, $Q_\mu$ differs from the atom midpoint, so the cells are not nearest-neighbour cells. The optimal boundary $s_*=0.32$ satisfies the midpoint condition and lowers energy by $63\%$.}
\label{fig:freevsequal}
\end{minipage}
\end{figure}

\subsection{Centroid sketches}
The distance analysis uses only the centroid condition, covering summaries with prescribed mass partitions.
\begin{definition}[centroid sketch]\label{def:sketch}
A \emph{centroid sketch} of $\mu\in\Pcal_2(\R)$ with $M$ cells is a partition $0=s_0<s_1<\dots<s_M=1$ of $(0,1)$ into intervals $I_j=(s_{j-1},s_j]$ together with the cell means $b_j=q_j^{-1}\int_{I_j}Q_\mu$, $q_j=s_j-s_{j-1}$. Its step quantile is $\hat Q_\mu=\sum_j b_j\mathbf 1_{I_j}$, its measure is $\hat\mu=\sum_j q_j\delta_{b_j}$, its \emph{residual} is $r_\mu=Q_\mu-\hat Q_\mu$, its \emph{quantization error} is $\varepsilon_\mu=\norm{r_\mu}=W_2(\mu,\hat\mu)$, and its \emph{residual energy} is $\varepsilon_\mu^2$.
\end{definition}
Since $Q_\mu$ is nondecreasing, so is $\hat Q_\mu$, and moreover $\hat Q_\mu = Q_{\hat\mu}$. The monotone coupling gives $W_2(\mu,\hat\mu)=\norm{r_\mu}$, and the residual integrates to zero on every cell. We use $\varepsilon_\mu$ to denote the chosen sketch, suppressing its dependence on the partition and cell count. Every $M$-cell sketch satisfies $\varepsilon_\mu\ge\varepsilon_M^\star(\mu)$. Equality holds for the optimal quantizer $\Ccal_M(\mu)$, whose cells are optimal clusters. The \emph{equal-mass} sketch $s_j=j/M$ (the optimal equal-weight approximation; \citealp{xu2019,bencheikh2022}) and all Lloyd fixed points are centroid sketches.

\subsection{Sliced sketch and comparison}
For $\mu\in\Pcal_2(\R^d)$ and a fixed set of \emph{shared} directions $\Theta=(\theta_1,\dots,\theta_L)\subset\Sph$, write $\mu_\ell=(\theta_\ell^\top\cdot)_\#\mu$ and define
\begin{equation}\label{eq:sketch}
\begin{aligned}
 \Scal_{\Theta,M}(\mu)&=\bigl(\Ccal_M(\mu_1),\dots,\Ccal_M(\mu_L)\bigr),\\
 D_{\Theta,M}^2(\mu,\nu)&=\frac1L\sum_{\ell=1}^L W_2^2\bigl(\Ccal_M(\mu_\ell),\Ccal_M(\nu_\ell)\bigr).
\end{aligned}
\end{equation}
With $M$ atoms per slice, the sketch stores $L(2M-1)$ scalars: atom locations and all but one weight per slice, in addition to the shared directions. Recording one residual energy per slice increases this to $2LM$ scalars; recording all per-cell energies instead requires $L(3M-1)$. Each projection is quantized independently, and comparisons use the resulting one-dimensional summaries directly, making it ideal for parallel computing.

Each summand in \eqref{eq:sketch} compares sorted weighted atoms. Its monotone coupling has at most $2M-1$ nonzero flows and is computed by a two-pointer sweep in $O(M)$ time, giving $O(LM)$ per query. For $N$ input points, construction using the SMAWK-based exact solver takes $O\bigl(LNd+LN\log N+LMN\bigr)$ time, with the three terms accounting for projection, sorting, and quantization, respectively \citep{wu1991,gronlund2017}. We note that the divide-and-conquer implementation in \citet{gronlund2017}  takes $O(LMN\log N)$ for the quantization step.

The reference distance uses the same directions:
\[
\SW_{2,\Theta}^2(\mu,\nu)=\frac1L\sum_{\ell=1}^L W_2^2(\mu_\ell,\nu_\ell).
\]
Section~\ref{sec:theory} bounds its discrepancy from $D_{\Theta,M}$. This separates sketching error from the approximation incurred by using finitely many directions \citep{nadjahi2020}.

\noindent\textbf{Choice of slices.}
The distance analysis of \cref{sec:theory} is carried out slice by slice and uses no property of the maps $x\mapsto\theta_\ell^\top x$ beyond their being shared by all measures. The identity, the certificates, and \cref{thm:sliced} therefore hold unchanged when the directions are learned from data, or when the linear maps are replaced by shared measurable maps $g_\ell:\R^d\to\R$ with $g_{\ell\#}\mu\in\Pcal_2(\R)$; the reference is then the sliced distance on those slices. \Cref{thm:rate} holds for learned unit directions. The moment recovery of \cref{prop:moments} requires linear slices in general position.

\section{Theory}\label{sec:theory}
Throughout, $\mu,\nu\in\Pcal_2(\R)$ have centroid sketches $\hat\mu, \hat\nu$ with cells $I_j=(s_{j-1},s_j]$, atoms $b_j$ (for $\hat\mu$) and $J_k=(t_{k-1},t_k]$, atoms $c_k$ (for $\hat\nu$), written with $M$ cells each to simplify notation. For unequal cell counts, each sum runs over the boundaries of its own sketch. Let $D^2=W_2^2(\hat\mu,\hat\nu)=\norm{\hat Q_\mu-\hat Q_\nu}^2$ be the squared sketch distance and
\begin{equation}\label{eq:phi}
 \varphi_\mu(s)=\int_0^s r_\mu(x)\,dx,\qquad \varphi_\nu(t)=\int_0^t r_\nu(x)\,dx .
\end{equation}
Because each residual has zero mean on each of its own cells, $\varphi_\mu$ vanishes at every $s_j$ and $\varphi_\nu$ at every $t_k$; for $s\in J_k$ one has $\varphi_\nu(s)=\int_{t_{k-1}}^s r_\nu=-\int_s^{t_k} r_\nu$.

\subsection{Exact sketch distance identity}
\begin{theorem}[partition-mismatch identity]\label{thm:identity}
For any two centroid sketches of $\mu,\nu\in\Pcal_2(\R)$,
\begin{align}\label{eq:identity}
 W_2^2(\mu,\nu)-D^2
 =&\norm{r_\mu-r_\nu}^2+2\sum_{j=1}^{M-1}(b_{j+1}-b_j)\,\varphi_\nu(s_j) \nonumber\\
 &+2\sum_{k=1}^{M-1}(c_{k+1}-c_k)\,\varphi_\mu(t_k).
\end{align}
\end{theorem}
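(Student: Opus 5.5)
Since $Q_\mu=\hat Q_\mu+r_\mu$ and $Q_\nu=\hat Q_\nu+r_\nu$, we can work entirely in $L^2(0,1)$ via \eqref{eq:W2quantile}. Expanding the square gives
\[
W_2^2(\mu,\nu)=\norm{\hat Q_\mu-\hat Q_\nu}^2+\norm{r_\mu-r_\nu}^2+2\langle \hat Q_\mu-\hat Q_\nu,\,r_\mu-r_\nu\rangle .
\]
The first term is $D^2$. The cross term splits into four pieces: $\langle\hat Q_\mu,r_\mu\rangle$, $\langle\hat Q_\nu,r_\nu\rangle$, $-\langle\hat Q_\mu,r_\nu\rangle$ and $-\langle\hat Q_\nu,r_\mu\rangle$. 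The \emph{same-sketch} pieces vanish. The reason is that $\hat Q_\mu$ is constant on each $I_j$ and $r_\mu$ integrates to zero there, so $\langle\hat Q_\mu,r_\mu\rangle=\sum_j b_j\int_{I_j}r_\mu=0$, and likewise for $\nu$. It therefore remains to show
\[
-\langle\hat Q_\mu,r_\nu\rangle=\sum_{j=1}^{M-1}(b_{j+1}-b_j)\varphi_\nu(s_j),
\]
together with the symmetric statement with the roles of $\mu$ and $\nu$ exchanged.

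For this mixed term I would use summation by parts, the discrete analogue of integrating $\hat Q_\mu\,d\varphi_\nu$ by parts. Write
\[
\langle\hat Q_\mu,r_\nu\rangle=\sum_{j=1}^M b_j\bigl(\varphi_\nu(s_j)-\varphi_\nu(s_{j-1})\bigr).
\]
Abel summation turns this into
\[
b_M\varphi_\nu(s_M)-b_1\varphi_\nu(s_0)-\sum_{j=1}^{M-1}(b_{j+1}-b_j)\varphi_\nu(s_j).
\]
The boundary terms vanish because $\varphi_\nu(0)=0$ and $\varphi_\nu(1)=\int_0^1 r_\nu=0$; the latter holds because $t_0=0$ and $t_M=1$ are boundaries of $\nu$'s own partition, so $\varphi_\nu$ vanishes there. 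This gives exactly the claimed sign. Equivalently, one can write $\hat Q_\mu=b_1+\sum_{j=1}^{M-1}(b_{j+1}-b_j)\mathbf 1_{(s_j,1)}$ and use $\int_{s_j}^1 r_\nu=-\varphi_\nu(s_j)$. The symmetric computation gives the $\varphi_\mu(t_k)$ sum. Multiplying by the factor $2$ from the cross term and collecting everything yields \eqref{eq:identity}.

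I do not expect a serious obstacle. The points needing care are integrability and the boundary terms. All functions are in $L^2(0,1)$, and $\varphi$ is absolutely continuous, so the expansion and the telescoping are legitimate. The step most likely to hide an error is the sign bookkeeping: the cross term $2\langle\hat Q_\mu-\hat Q_\nu,r_\mu-r_\nu\rangle$ reduces to $-2\langle\hat Q_\mu,r_\nu\rangle-2\langle\hat Q_\nu,r_\mu\rangle$, and each of these equals $+2\sum(\text{increments})\cdot\varphi$. Unequal cell counts cause no change, since each sum is taken over the breakpoints of its own step function.
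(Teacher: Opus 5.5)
Your proposal is correct and follows the paper's proof essentially step for step. You expand $\norm{(\hat Q_\mu-\hat Q_\nu)+(r_\mu-r_\nu)}^2$, kill the same-sketch cross terms by the zero cell means, and Abel-sum the mixed terms using $\varphi_\nu(0)=0=\varphi_\nu(1)$. Your alternative step-function form of $\hat Q_\mu$ is a harmless equivalent of the same summation by parts.
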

\begin{proof}
By \eqref{eq:W2quantile}, $W_2^2(\mu,\nu)=\norm{(\hat Q_\mu-\hat Q_\nu)+(r_\mu-r_\nu)}^2=D^2+\norm{r_\mu-r_\nu}^2+2\langle\hat Q_\mu-\hat Q_\nu,\,r_\mu-r_\nu\rangle$. Since $\hat Q_\mu$ is constant on each $I_j$ and $r_\mu$ integrates to zero there, $\langle\hat Q_\mu,r_\mu\rangle=0$; likewise $\langle\hat Q_\nu,r_\nu\rangle=0$. Hence the cross term is $-2\langle\hat Q_\mu,r_\nu\rangle-2\langle\hat Q_\nu,r_\mu\rangle$. Abel summation with $\varphi_\nu(0)=0=\varphi_\nu(1)$ gives $\langle\hat Q_\mu,r_\nu\rangle=\sum_{j=1}^M b_j\bigl(\varphi_\nu(s_j)-\varphi_\nu(s_{j-1})\bigr)=-\sum_{j=1}^{M-1}(b_{j+1}-b_j)\varphi_\nu(s_j)$, and symmetrically for $\langle\hat Q_\nu,r_\mu\rangle$.
\end{proof}

\Cref{fig:identity} illustrates the identity. The residual term is nonnegative and at most $(\varepsilon_\mu+\varepsilon_\nu)^2$. The boundary terms have the opposite sign since monotonicity of each quantile implies $\varphi_\mu,\varphi_\nu\le0$, while the atom increments are nonnegative. These terms vanish on a shared partition and account for possible expansion of the sketch distance when partitions differ. Unlike a triangle-inequality bound \citep{beugnot2021,nadjahi2021}, the identity \eqref{eq:identity} distinguishes underestimation from overestimation.

\begin{figure}[t]
\centering
\begin{minipage}[c]{0.5\textwidth}
\centering
\resizebox{\linewidth}{!}{\input{figures/fig3_mismatch_identity_inline.tex}}
\end{minipage}\hfill
\begin{minipage}[c]{0.46\textwidth}
\caption{Partition-mismatch identity \eqref{eq:identity} for forward-scatter projections of a normal and a leukaemia subject in the AML cytometry data ($10^4$ cells each; \cref{sec:experiments}), with optimal $M=3$ sketches. Top and middle: quantiles, step quantiles, and shaded residuals; sketch cells differ. Bottom: residual primitives \eqref{eq:phi}, each zero at its own cell boundaries. Dots pair each sketch’s jumps with the other’s primitive at the same boundary. Here $W_2^2-D^2=-0.0127$, comprising $\norm{r_\mu-r_\nu}^2=0.0085$ and boundary terms $-0.0124$ and $-0.0088$.}
\label{fig:identity}
\end{minipage}
\end{figure}

\begin{corollary}[computable distance certificates]\label{cor:certificates}
Write $S=\varepsilon_\mu+\varepsilon_\nu$, $\Delta_\mu=b_M-b_1$, $\Delta_\nu=c_M-c_1$, and let $q^\mu_{\max},q^\nu_{\max}$ be the largest cell masses. Define
\begin{equation}\label{eq:coarse-boundary-budget}
 H_0=\Delta_\mu\sqrt{q^\nu_{\max}}\,\varepsilon_\nu
    +\Delta_\nu\sqrt{q^\mu_{\max}}\,\varepsilon_\mu.
\end{equation}
If per-cell energies $\varepsilon_{\nu,k}^2=\int_{J_k}r_\nu^2$ are stored, define, for $s\in J_k=(t_{k-1},t_k]$,
\[
 \omega_\nu(s)=\sqrt{\frac{(s-t_{k-1})(t_k-s)}{t_k-t_{k-1}}}\,\varepsilon_{\nu,k},
\]
with value zero at cell boundaries, and define $\omega_\mu$ analogously. The refined boundary budget is
\small{
\begin{equation}\label{eq:cell-boundary-budget} H_{\mathrm{cell}}=2\sum_{j=1}^{M-1}\big((b_{j+1}-b_j)\omega_\nu(s_j)+(c_{j+1}-c_j)\omega_\mu(t_j)\big)\le H_0. \end{equation}}
For $H=H_0$, or $H=H_{\mathrm{cell}}$ when available,
\begin{equation}\label{eq:certB}
 -H\le W_2^2(\mu,\nu)-D^2\le S^2,
\end{equation}
and, writing $x_+=\max\{x,0\}$,
\begin{equation}\label{eq:distance-certificate-interval}
 \max\bigl\{D-S,\sqrt{(D^2-H)_+}\bigr\}
 \le W_2(\mu,\nu)\le\sqrt{D^2+S^2}.
\end{equation}
For unequal cell counts, each sum runs over the interior boundaries of its own sketch.
\end{corollary}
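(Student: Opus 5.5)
Everything here follows from \cref{thm:identity} together with sign information and a pointwise bound on the residual primitives. The upper bound in \eqref{eq:certB} comes first. In \eqref{eq:identity} the increments $b_{j+1}-b_j$ and $c_{k+1}-c_k$ are nonnegative because $\hat Q_\mu,\hat Q_\nu$ are nondecreasing. The primitives $\varphi_\mu,\varphi_\nu$ are nonpositive: on a cell $J_k$ the residual $r_\nu$ is nondecreasing with zero mean, so its running integral from $t_{k-1}$ first decreases and then returns to $0$ at $t_k$. Hence both boundary sums are $\le0$, and
\[
W_2^2-D^2\le\norm{r_\mu-r_\nu}^2\le(\varepsilon_\mu+\varepsilon_\nu)^2=S^2 .
\]

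For the lower bound I drop the nonnegative residual term and bound $\abs{\varphi_\nu(s)}$ pointwise. For $s\in J_k$ we have both
\[
\varphi_\nu(s)=\int_{t_{k-1}}^s r_\nu=-\int_s^{t_k}r_\nu .
\]
Cauchy--Schwarz on the two pieces gives $\abs{\varphi_\nu(s)}\le\sqrt{a}\,A$ and $\abs{\varphi_\nu(s)}\le\sqrt{c}\,C$, where $a=s-t_{k-1}$, $c=t_k-s$, and $A^2+C^2=\varepsilon_{\nu,k}^2$ are the energies on the two sides. I then optimize over the split: $\min(\sqrt a A,\sqrt c C)$ is maximized when $\sqrt a A=\sqrt c C$. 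This yields $\abs{\varphi_\nu(s)}\le\omega_\nu(s)$, since $aA^2=cC^2$ with $A^2+C^2=\varepsilon_{\nu,k}^2$ gives $aA^2=\frac{ac}{a+c}\varepsilon_{\nu,k}^2$. Alternatively, one can take a convex combination $\frac{c}{a+c}\varphi+\frac{a}{a+c}\varphi$ of the two representations and apply Cauchy--Schwarz with weights. Substituting into \eqref{eq:identity} gives $W_2^2-D^2\ge-H_{\mathrm{cell}}$; each boundary $s_j$ lies in exactly one $J_k$, and $\omega$ vanishes at the other sketch's boundaries.

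Next comes $H_{\mathrm{cell}}\le H_0$, which I expect to need the most care. Pointwise, $\omega_\nu(s)^2\le\frac{(t_k-t_{k-1})}{4}\varepsilon_{\nu,k}^2\le\frac{q^\nu_{\max}}{4}\varepsilon_{\nu,k}^2$, so $\omega_\nu(s_j)\le\frac12\sqrt{q^\nu_{\max}}\,\varepsilon_{\nu,k(j)}$. Several boundaries $s_j$ may fall in the same cell $J_k$, so summing $(b_{j+1}-b_j)\varepsilon_{\nu,k(j)}$ naively could double count. I handle this by bounding $\varepsilon_{\nu,k(j)}\le\varepsilon_\nu$ for every $j$, and then telescoping $\sum_j(b_{j+1}-b_j)=\Delta_\mu$. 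This gives $2\sum_j(b_{j+1}-b_j)\omega_\nu(s_j)\le\Delta_\mu\sqrt{q^\nu_{\max}}\,\varepsilon_\nu$, and the factor $2$ cancels the $\frac12$. The symmetric estimate handles the other sum, so $H_{\mathrm{cell}}\le H_0$, and hence $-H_0\le W_2^2-D^2$ as well. For unequal cell counts the same argument runs over each sketch's own boundaries.

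Finally, \eqref{eq:distance-certificate-interval} follows from the pieces above. The upper bound $W_2\le\sqrt{D^2+S^2}$ is the square root of the upper bound in \eqref{eq:certB}. For the lower bound, $W_2^2\ge D^2-H$ and $W_2\ge0$ give $W_2\ge\sqrt{(D^2-H)_+}$. Separately, the triangle inequality $D=W_2(\hat\mu,\hat\nu)\le\varepsilon_\mu+W_2(\mu,\nu)+\varepsilon_\nu$ gives $W_2\ge D-S$. Taking the maximum of the two lower bounds completes the argument.
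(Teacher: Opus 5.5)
Your proposal is correct and follows the paper's proof essentially step by step: the sign of the boundary terms, the pointwise bound $\abs{\varphi_\nu(s)}\le\omega_\nu(s)\le\tfrac12\sqrt{q^\nu_{\max}}\,\varepsilon_\nu$ summed against the telescoping increments, and the square-root and triangle-inequality step. Your convex-combination alternative is exactly the paper's single Cauchy--Schwarz against the mean-zero function $\mathbf 1_{(t_{k-1},s]}-(s-t_{k-1})/(t_k-t_{k-1})$, and your min-over-splits argument is an equivalent rephrasing that gives the same constant.
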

The general budget uses one residual energy per slice; the refinement uses per-cell energies and a linear sweep of both mass partitions. Per-cell energies also support higher-moment correction (\cref{prop:moments}). Lipschitz quantiles with maximum cell mass $O(M^{-1})$ yield $O(M^{-2})$ squared-distance error. \Cref{app:regularity} gives explicit bounds and verifies these conditions for optimal quantizers of densities bounded above and away from zero on compact intervals. The proof is in \cref{app:proofCorollary}.

The boundary budget is not generally $O(S^2)$: \cref{ex:sharp} (\cref{app:sharp}) attains the $M^{-3/2}$ scale allowed by $H_0$ when $q_{\max},\varepsilon_\mu,\varepsilon_\nu\asymp M^{-1}$.

\subsection{Sliced certificates}
\begin{theorem}[sliced certificate]\label{thm:sliced}
Let $\Theta$ be shared, and equip each $\mu_\ell,\nu_\ell$ with centroid sketches having residual errors $\varepsilon_{\mu,\ell},\varepsilon_{\nu,\ell}$. Write $D_{\Theta,M}$ for their comparison and let $H_\ell$ be any corresponding boundary budget from \cref{cor:certificates} or \cref{cor:lipschitz}. With $E_\mu^2=L^{-1}\sum_\ell\varepsilon_{\mu,\ell}^2$ and $E_\nu$ defined analogously,
{\small
\begin{multline}\label{eq:slicedcert}
-\frac1L\sum_{\ell=1}^L H_\ell
\le \SW_{2,\Theta}^2(\mu,\nu)-D_{\Theta,M}^2(\mu,\nu)
\le (E_\mu+E_\nu)^2,\\
\shoveleft{\textnormal{and}~\bigl|\SW_{2,\Theta}(\mu,\nu)
-D_{\Theta,M}(\mu,\nu)\bigr|\le E_\mu+E_\nu.}
\end{multline}}
\end{theorem}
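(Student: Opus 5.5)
The plan is to average the per-slice two-sided certificate of \cref{cor:certificates} over $\ell$ for the squared statement, and to obtain the unsquared statement from a triangle inequality in a product space.

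\textbf{Squared bounds.} For each $\ell$, \eqref{eq:certB} applied to $\mu_\ell,\nu_\ell$ and their centroid sketches gives
\[
-H_\ell\le W_2^2(\mu_\ell,\nu_\ell)-W_2^2(\hat\mu_\ell,\hat\nu_\ell)\le(\varepsilon_{\mu,\ell}+\varepsilon_{\nu,\ell})^2 .
\]
If $H_\ell$ comes from \cref{cor:lipschitz}, I will assume it is likewise a valid lower budget for the slice. Averaging over $\ell$ yields the lower bound directly. For the upper bound I expand
\[
\frac1L\sum_\ell(\varepsilon_{\mu,\ell}+\varepsilon_{\nu,\ell})^2=E_\mu^2+E_\nu^2+\frac2L\sum_\ell\varepsilon_{\mu,\ell}\varepsilon_{\nu,\ell}.
\]
Cauchy--Schwarz bounds the cross term by $2E_\mu E_\nu$, so the average is at most $(E_\mu+E_\nu)^2$.

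\textbf{Unsquared bound.} I will work in the Hilbert space $\mathcal H=(L^2(0,1))^L$ with norm $\norm{(f_\ell)}_{\mathcal H}^2=L^{-1}\sum_\ell\norm{f_\ell}^2$. By \eqref{eq:W2quantile},
\[
\SW_{2,\Theta}(\mu,\nu)=\norm{(Q_{\mu_\ell}-Q_{\nu_\ell})_\ell}_{\mathcal H},\qquad D_{\Theta,M}(\mu,\nu)=\norm{(\hat Q_{\mu_\ell}-\hat Q_{\nu_\ell})_\ell}_{\mathcal H},
\]
using $\hat Q=Q_{\hat\mu}$. The two vectors differ by $(r_{\mu_\ell}-r_{\nu_\ell})_\ell$. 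The reverse triangle inequality then gives
\[
\abs{\SW_{2,\Theta}-D_{\Theta,M}}\le\norm{(r_{\mu_\ell})_\ell}_{\mathcal H}+\norm{(r_{\nu_\ell})_\ell}_{\mathcal H}=E_\mu+E_\nu .
\]

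\textbf{Main obstacle.} The only delicate point is that the per-slice errors must be aggregated in $\ell^2$ (via $E_\mu,E_\nu$) and not summed slicewise. The product-space formulation handles this cleanly; the rest is bookkeeping.
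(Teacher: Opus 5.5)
Your proposal is correct and follows the paper's proof. The squared bounds come from averaging \eqref{eq:certB} over the slices, and your Cauchy--Schwarz expansion is exactly the Minkowski step the paper invokes; the unsquared bound is a product-space triangle inequality, which you apply directly to the quantile vectors in $(L^2(0,1))^L$, whereas the paper applies the reverse triangle inequality to the $\R^L$ vector of slice distances and then uses $\abs{W_2(\mu_\ell,\nu_\ell)-W_2(\hat\mu_\ell,\hat\nu_\ell)}\le\varepsilon_{\mu,\ell}+\varepsilon_{\nu,\ell}$, which amounts to the same chain of inequalities. Nothing needs to be assumed for the Lipschitz budget, since \cref{cor:lipschitz} itself states (under its Lipschitz hypotheses) that $H_A$ may replace $H$ in \eqref{eq:certB}.
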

The squared bounds follow by averaging the per-slice certificates and applying Minkowski's inequality. The unsquared bound is the triangle inequality in the product of the slice spaces, as in \citet{nadjahi2021}.

\noindent\textbf{Sampling and direction error.}
If the inputs are empirical measures $\mu^n,\nu^n$, the sketches are built from them, and their comparison $D_{\Theta,M}(\mu^n,\nu^n)$ differs from the population distance $\SW_{2,\Theta}(\mu,\nu)$ for two reasons: the quantization of the empirical slices, and the sampling error of the slices themselves. Let $E_\mu^n,E_\nu^n$ be the residual errors of the sketches of $\mu^n,\nu^n$, and let
\[
 S_\mu^n=\left(\frac1L\sum_{\ell=1}^L
 W_2^2(\mu_\ell^n,\mu_\ell)\right)^{1/2}
\]
be the sliced distance between the empirical and population projections of $\mu$, with $S_\nu^n$ defined similarly. \Cref{thm:sliced}, applied to the empirical inputs, bounds the quantization error by $E_\mu^n+E_\nu^n$, and the triangle inequality for $\SW_{2,\Theta}$ bounds the sampling error by $S_\mu^n+S_\nu^n$:
\[
 \abs{D_{\Theta,M}(\mu^n,\nu^n)-\SW_{2,\Theta}(\mu,\nu)}
 \le E_\mu^n+E_\nu^n+S_\mu^n+S_\nu^n.
\]
The first two terms are read off the sketches; the last two are not observable and depend on the sample sizes and the data. This deterministic decomposition therefore leaves sampling rates to the assumptions on the data \citep{nadjahi2020,nietert2022}. For independent uniformly sampled directions, put $Z(\theta)=W_2^2(\theta_\#\mu,\theta_\#\nu)$. The Monte Carlo estimate satisfies \citep[Theorem~6]{nadjahi2020}
\[
 \E_\Theta\abs{\SW_{2,\Theta}^2-\SW_2^2}
 \le\sqrt{\Var_\theta(Z)/L}.
\]
Finite second moments suffice here because $Z$ is bounded uniformly in $\theta$. The corresponding expected unsquared error is also $O(L^{-1/2})$, with a moment-dependent constant; details are in \cref{app:proofs}.

\subsection{Rates: free weights vs. equal-mass cells}
\begin{theorem}[sliced quantization complexity]\label{thm:rate}
If $X\sim\mu$ satisfies $\E\norm{X}^{2+\delta}<\infty$ for some $\delta>0$, then for the optimal-quantizer sketch
\begin{equation}\label{eq:pierce}
 \sup_{\theta\in\Sph}\varepsilon_M^\star(\theta_\#\mu)\le \frac{C_\delta\,\bigl(\E\norm{X}^{2+\delta}\bigr)^{1/(2+\delta)}}{M},
\end{equation}
hence $E_\mu\le C_\delta(\E\norm{X}^{2+\delta})^{1/(2+\delta)}/M$
for every $\Theta$, for a universal constant $C_\delta$.%
\end{theorem}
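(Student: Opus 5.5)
The plan is to reduce the claim to the one-dimensional extended Pierce bound \citep{pierce1970,luschgy2008}, applied uniformly over directions. In its non-asymptotic form the bound reads: for $r=2$ and any $\delta>0$ there is a constant $C_\delta$ such that every $\eta\in\Pcal(\R)$ satisfies
\[
\varepsilon_M^\star(\eta)\le C_\delta\,\sigma_{2+\delta}(\eta)\,M^{-1},\qquad \sigma_{p}(\eta)=\inf_{a\in\R}\Bigl(\int\abs{x-a}^{p}\,d\eta(x)\Bigr)^{1/p}.
\]
The exponent $M^{-1}$ is the one-dimensional Zador exponent $M^{-1/d}$ with $d=1$.

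The first step is to apply this bound to $\eta=\theta_\#\mu$ for a fixed $\theta\in\Sph$. We bound $\sigma_{2+\delta}(\theta_\#\mu)$ by taking $a=0$ and using Cauchy--Schwarz, $\abs{\theta^\top X}\le\norm{\theta}\norm{X}=\norm{X}$. This gives $\E\abs{\theta^\top X}^{2+\delta}\le\E\norm{X}^{2+\delta}$. The right-hand side does not depend on $\theta$, so taking the supremum over $\theta$ yields \eqref{eq:pierce}.

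The second step is the statement about $E_\mu$. For the optimal-quantizer sketch, $\varepsilon_{\mu,\ell}=\varepsilon_M^\star(\mu_\ell)$, so
\[
E_\mu^2=L^{-1}\sum_\ell \varepsilon_M^\star(\mu_\ell)^2\le\sup_\theta\varepsilon_M^\star(\theta_\#\mu)^2.
\]
Taking square roots finishes the argument, and it holds for every $\Theta$, including learned unit directions.

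The main obstacle is to make sure the cited Pierce bound genuinely holds for all $M\ge1$ with a constant that depends only on $\delta$ (and $r=2$, $d=1$), not on $\mu$. If one prefers a self-contained argument, the fallback is a direct construction. Quantize the mass interval $(0,1)$ with a geometric grid adapted to the tails, and use Markov's inequality $\Pr(\abs{Y}>R)\le m_{2+\delta}R^{-(2+\delta)}$, where $Y=\theta^\top X$ and $m_{2+\delta}=\E\abs{Y}^{2+\delta}$. Split $\R$ into dyadic shells $2^k\le\abs{y}<2^{k+1}$ and allocate $M_k$ points to each shell, roughly in proportion to $\bigl(2^{2k}\Pr(\text{shell }k)\bigr)^{1/3}$. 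A uniform grid on shell $k$ contributes error at most $2^{2k}\Pr(\text{shell }k)/M_k^2$. Hölder's inequality, combined with the summability of $\sum_k 2^{2k}\Pr(\text{shell }k)\cdot 2^{-\delta' k}$ coming from the $(2+\delta)$ moment, then gives total error $O(m_{2+\delta}^{2/(2+\delta)}M^{-2})$.

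Two further points need care in this fallback. Scale invariance, obtained by normalizing $Y$ by $m_{2+\delta}^{1/(2+\delta)}$, handles the homogeneity of the bound. A constant number of extra atoms placed near the origin handles the small values of $\abs{y}$; its cost is absorbed by adjusting $C_\delta$. I expect the bookkeeping in this allocation to be the only delicate part, since the rest is the one-line projection bound above.
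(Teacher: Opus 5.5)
Your proposal is correct and follows essentially the same route as the paper's proof. Both apply the extended Pierce lemma \citep[Lemma~1]{luschgy2008} to $Y=\theta^\top X$, bound $\norm{\theta^\top X}_{L^{2+\delta}}\le(\E\norm{X}^{2+\delta})^{1/(2+\delta)}$ uniformly in $\theta$ via $\abs{\theta^\top X}\le\norm X$, and average the squared per-slice errors to obtain the bound on $E_\mu$; your translation-infimum form $\sigma_{2+\delta}$ corresponds to the paper's remark that $\E\norm X^{2+\delta}$ may be replaced by $\min_a\E\norm{X-a}^{2+\delta}$, and since the paper simply cites the lemma for a constant $C_\delta$ uniform over all laws and all $M\ge1$, your dyadic-shell fallback is not needed.
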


See \cref{prop:equalmass} in \cref{app:equalmass} for more details. %

\subsection{Moments}
A sketch is also a summary of each projected distribution, so a natural question is how well its directional moments, and the moment tensors of $X$ assembled from them across directions, match those of the input. The next proposition shows that the mean of every slice is exact, that the second-moment deficit equals the stored residual energy, and that the error in higher moments is controlled by that energy, with a sharper bound when per-cell energies are stored; combining the slices by least squares then bounds the error of the recovered moment tensors.
\begin{proposition}[moment recovery]\label{prop:moments}
Let $X\sim\mu\in\Pcal_2(\R^d)$ and, for a unit direction $\theta$, couple $X_\theta=\theta^\top X$ in law with its centroid sketch $\hat X_\theta$ through their quantiles. Write $\varepsilon_{\mu,\theta}^2=\E(X_\theta-\hat X_\theta)^2$; on the chosen directions, $\varepsilon_{\mu,\theta_\ell}=\varepsilon_{\mu,\ell}$ as in \cref{thm:sliced}. Then
\begin{equation}\label{eq:centroid-moment-identities}
 \E\hat X_\theta=\E X_\theta,\qquad
 \E X_\theta^2=\E\hat X_\theta^2+\varepsilon_{\mu,\theta}^2,
\end{equation}
and $\abs{\E f(X_\theta)-\E f(\hat X_\theta)}\le\tfrac12\norm{f''}_\infty\varepsilon_{\mu,\theta}^2$ for $f\in C^2(\R)$ with bounded second derivative.
If $\norm X\le R$ almost surely, with $R>0$, then for every integer $r\ge2$, writing $m_{r,\theta}=\E X_\theta^r$ and $\hat m_{r,\theta}=\E\hat X_\theta^r$,
\begin{equation}\label{eq:uncorrected-moment-bound}
 \abs{m_{r,\theta}-\hat m_{r,\theta}}
 \le\binom r2R^{r-2}\varepsilon_{\mu,\theta}^2,
\end{equation}
with $m_{r,\theta}\ge\hat m_{r,\theta}$ for even $r$.
For quantile cells $I_{\theta,j}$ with centroids $b_{\theta,j}$, let
\[
 e_{\theta,j}=\int_{I_{\theta,j}}\bigl(Q_{\theta_\#\mu}(x)-b_{\theta,j}\bigr)^2\,dx.
\]
If these per-cell energies are stored, define
\begin{equation}\label{eq:corrected-moment-estimate}
 \tilde m_{r,\theta}
 =\hat m_{r,\theta}+\binom r2\sum_j b_{\theta,j}^{r-2}e_{\theta,j}.
\end{equation}
Then $\tilde m_{2,\theta}=m_{2,\theta}$ and, whenever $\abs{X_\theta-\hat X_\theta}\le h_\theta$ almost surely, for $r\ge3$,
\begin{equation}\label{eq:corrected-moment-bound}
 \abs{m_{r,\theta}-\tilde m_{r,\theta}}
 \le\binom r3R^{r-3}h_\theta\varepsilon_{\mu,\theta}^2.
\end{equation}
For $T_r=\E X^{\otimes r}$, let $A_rT=(\langle T,\theta_\ell^{\otimes r}\rangle)_{\ell=1}^L$ on symmetric $r$-tensors. If $A_r$ has full column rank, the least-squares reconstructions from the uncorrected and corrected moment vectors satisfy
\begin{align}
 \norm{\hat T_r-T_r}_F
 &\le\binom r2R^{r-2}\norm{A_r^\dagger}_{2\to F}
 \bigl\|(\varepsilon_{\mu,\ell}^2)_\ell\bigr\|_2,
 &&r\ge2,\label{eq:uncorrected-tensor-bound}\\
 \norm{\tilde T_r-T_r}_F
 &\le\binom r3R^{r-3}\norm{A_r^\dagger}_{2\to F}
 \bigl\|(h_{\theta_\ell}\varepsilon_{\mu,\ell}^2)_\ell\bigr\|_2,
 &&r\ge3.\label{eq:corrected-tensor-bound}
\end{align}
Full rank holds for $L\ge\binom{d+r-1}{r}$ generic directions \citep{bodmann2018}. Without bounded support, the mean is still recovered exactly from $d$ spanning directions, and storing one total residual energy per slice recovers the covariance exactly from $\binom{d+1}{2}$ slices.
\end{proposition}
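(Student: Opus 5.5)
The plan is to work entirely on the quantile domain. Write $U$ uniform on $(0,1)$, $X_\theta=Q(U)$ with $Q=Q_{\theta_\#\mu}$, and $\hat X_\theta=\hat Q(U)$, so that $R_\theta:=X_\theta-\hat X_\theta=r(U)$. The residual integrates to zero on each cell $I_{\theta,j}$ and $\hat Q$ is constant there. This gives $\E[R_\theta\mid \hat X_\theta]=0$ cellwise, or more precisely $\E[R_\theta\mathbf 1_{U\in I_{\theta,j}}]=0$ for every $j$.

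The identities \eqref{eq:centroid-moment-identities} follow from this. The first comes from summing over cells. For the second, expand $\E(\hat X_\theta+R_\theta)^2$; the cross term $2\E[\hat X_\theta R_\theta]=2\sum_j b_{\theta,j}\int_{I_{\theta,j}}r=0$.

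For general $f\in C^2$, I will apply Taylor's theorem with Lagrange remainder about $\hat X_\theta$:
\[
f(X_\theta)=f(\hat X_\theta)+f'(\hat X_\theta)R_\theta+\tfrac12 f''(\xi)R_\theta^2 .
\]
The linear term has zero expectation by the same cellwise argument, because $f'(\hat X_\theta)$ is constant on each cell; integrability follows from the finite second moments and bounded $f''$, which makes $f'$ of linear growth. This yields the $\tfrac12\norm{f''}_\infty\varepsilon^2$ bound.

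For \eqref{eq:uncorrected-moment-bound}, I will take $f(x)=x^r$ on $[-R,R]$. This is legitimate because $\abs{X_\theta}\le R$, and $\hat X_\theta$, being a cell average, also lies in $[-R,R]$. Then $\sup\abs{f''}=r(r-1)R^{r-2}$, which gives $\binom r2R^{r-2}\varepsilon^2$. The sign claim for even $r$ comes from convexity of $x^r$: the Lagrange remainder is nonnegative, or equivalently one can use Jensen's inequality cellwise.

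For the corrected estimate, I will expand to third order:
\[
X^r=\hat X^r+r\hat X^{r-1}R+\binom r2\hat X^{r-2}R^2+\tfrac{f'''(\xi)}{6}R^3 .
\]
Taking expectations kills the linear term. The quadratic term is exactly $\binom r2\sum_j b_j^{r-2}e_j$, because $\hat X=b_j$ on cell $j$. Hence $m_r-\tilde m_r=\E[f'''(\xi)R^3]/6$. Bounding $\abs{R^3}\le h_\theta R^2$ and $\abs{f'''}\le r(r-1)(r-2)R^{r-3}$ gives \eqref{eq:corrected-moment-bound}. For $r=2$ the remainder vanishes, so $\tilde m_2=m_2$.

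For the tensor statements, I will use $\langle T_r,\theta^{\otimes r}\rangle=\E(\theta^\top X)^r=m_{r,\theta}$. Then $T_r$ solves $A_rT=m$ exactly, and the least-squares reconstruction is $\hat T_r=A_r^\dagger\hat m$. Full column rank gives $A_r^\dagger A_r=I$, so
\[
\hat T_r-T_r=A_r^\dagger(\hat m-m),
\]
and the per-slice bounds, stacked into a vector and measured in $\ell_2$, give \eqref{eq:uncorrected-tensor-bound} and \eqref{eq:corrected-tensor-bound}. The generic full-rank claim is cited from \citet{bodmann2018}.

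The unbounded-support remarks need no boundedness. The mean follows from linearity: $\theta_\ell^\top\E X=\E\hat X_{\theta_\ell}$ for $d$ spanning directions, and this system is invertible. For the covariance, $\E X_\theta^2=\E\hat X_\theta^2+\varepsilon_\theta^2$ is computable from the sketch plus one stored energy, so the second moments are exact. The rank condition for $r=2$ with $\binom{d+1}2$ generic directions then recovers $T_2$, and subtracting the outer product of the mean gives the covariance.

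The main obstacle is justifying the Taylor steps with the intermediate point $\xi$. It must lie between $X_\theta$ and $\hat X_\theta$, hence in $[-R,R]$, so that the derivative bounds apply; and one needs measurability and integrability of the remainder. I would sidestep both by using the integral form of the remainder, $\int_0^1(1-t)f''(\hat X+tR)\,dt\,R^2$. Convexity of $[-R,R]$ then keeps $\hat X+tR$ inside the interval, which handles the location issue without choosing $\xi$ measurably.
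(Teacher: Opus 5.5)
Your proposal is correct and follows the paper's proof essentially step for step: the cellwise zero-mean property of the residual (which the paper phrases as $\hat X_\theta=\E[X_\theta\mid\sigma(C)]$), Taylor expansion with integral remainder on the convex interval $[-R,R]$, the exact identification of the quadratic term with $\binom r2\sum_j b_{\theta,j}^{r-2}e_{\theta,j}$, and $A_r^\dagger A_r=I$ for the tensor bounds. Two small differences: the paper proves the generic-rank claim itself, by showing that the determinant is a polynomial in the directions that does not vanish identically. It also notes explicitly that injectivity of $A_2$ forces the directions to span $\R^d$, so the mean, and hence the covariance, is recovered from the same $\binom{d+1}{2}$ slices; you should state this rather than leave it implicit.
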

The centroid identities and the $C^2$ weak-error bound are classical \citep{pages2003,lemaire2020}. For every fixed order, $\varepsilon_{\mu,\theta}=O(M^{-1})$ gives $O(M^{-2})$ moment error; the correction improves this if also $h_\theta=O(M^{-1})$, as under the bounded-density hypotheses of \cref{lem:cellwidth}. Per-cell energies are the metadata used in \cref{cor:certificates}; no higher residual moments are required. Combining the identities with least squares gives a deterministic error bound for the sliced-moment construction also used by \citet{liu2025emperor}. Exactness concerns the measure being sketched, hence empirical moments for empirical inputs.

\section{Experiments}\label{sec:experiments}

\begin{table}[t]
\centering\small
\caption{Classification with $L=16$ shared learned slices. $N$: points per distribution (mean distinct words for documents); $M$: smallest optimal-sketch size matching uncompressed validation accuracy within $0.5$ percentage points, also used for equal mass sketches. Scores are percentages. Bold marks the best method, and $^{*}$ a significant paired difference between ours and equal-mass.
Error ratio: median equal mass/optimal quantization energy. (error ratio > 1 shows that equal-mass results in larger error compared to ours.)
}
\label{tab:compression}
{\begin{tabular}{@{}lccccccc@{}}
\toprule
benchmark & $(N,M)$ & BW & Exact OT & unc.\ $\SW_{2,\Theta}$ & equal-mass & ours & error ratio\\
\midrule
 & & \multicolumn{5}{c}{test error ($\downarrow$)} & \\
\cmidrule(lr){3-7}
BBCSport & $(117,4)$ & $6.2\pm1.9$ & $5.1\pm1.2$ & $1.1\pm0.8$ & $1.9\pm1.1$ & $\mathbf{1.1\pm0.8}$ & $1.8$\\
Twitter & $(10,4)$ & $28.7\pm1.4$ & $29.7\pm1.5$ & $27.0\pm1.4$ & $27.0\pm0.9$ & $\mathbf{26.7\pm0.2}$ & $4.0$\\
Classic & $(39,4)$ & $3.2\pm0.5$ & $\mathbf{2.9\pm0.4}$ & $3.1\pm0.3$ & $3.8\pm0.4$ & $3.3\pm0.3^{*}$ & $1.8$\\
Recipe & $(48,8)$ & $43.2\pm1.0$ & $42.9\pm0.8$ & $37.4\pm0.9$ & $38.7\pm0.7$ & $\mathbf{37.4\pm1.0}^{*}$ & $4.1$\\
Reuters & $(36,2)$ & $4.7$ & $3.9$ & $2.9$ & $3.4$ & $\mathbf{2.8}$ & $1.2$\\
\midrule
 & & \multicolumn{5}{c}{accuracy ($\uparrow$)} & \\
\cmidrule(lr){3-7}
AML & $(10^4,2)$ & $97.5\pm1.4$ & $97.4\pm1.2$ & $98.1\pm1.5$ & $97.7\pm1.7$ & $\mathbf{98.2\pm1.5}^{*}$ & $1.3$\\
\midrule
ModelNet40 & $(1024,8)$ & $62.4$ & $\mathbf{83.2}$ & $80.1$ & $79.0$ & $80.3^{*}$ & $1.3$\\
\bottomrule
\end{tabular}}
\end{table}

\noindent\textbf{Classification.}
We evaluate distribution classification on five document datasets
\citep{kusner2015,sato2022}, AML patient samples ~\citep{aghaeepour2013} represented by cell
measurements, and ModelNet40~\citep{mn40} point clouds (\cref{tab:compression}).
All sliced methods use the same $L=16$ linear projections, learned
 and then frozen. We compare the uncompressed
sliced distance, optimal sketches, and equal-mass sketches with the
same number of atoms $M$. Equal-mass sketches use $M$ quantile intervals of fixed mass $1/M$, while ours adapt the interval masses to minimize
quantization error. Classification uses nearest neighbours on each distance. For each benchmark, we select the smallest
$M$ for which the optimal sketch is within $0.5$ percentage points
of the uncompressed validation accuracy. Optimal and equal-mass
sketches store $L(2M-1)$ and $LM$ values, respectively, compared
with $LN$ for uncompressed projections. We also report exact optimal transport with Euclidean ground cost (Word Mover's Distance for documents) and the Bures--Wasserstein distance (BW) between Gaussian approximations defined by each distribution's mean and covariance.
These baselines are unsupervised, whereas our projections use
training labels. 
We provide implementation details for all experiments in \cref{app:impl}, and ablations in  \cref{app:sweeps}.

Optimal sketches store only $48$--$240$ values per distribution
while remaining within $0.3$ percentage points of the uncompressed
test score on all seven benchmarks. This corresponds to storage
reductions of $1.5$--$18\times$ for documents, $68\times$ for point
clouds, and over $3000\times$ for AML samples. At the same $M$,
optimal sketches outperform equal-mass sketches on all seven
benchmarks, with statistically significant differences on Classic,
Recipe, AML, and ModelNet40. Optimal sketches also outperform the exact transport baselines on five benchmarks. Sketches are also more accurate than BW on six benchmarks. 

\noindent\textbf{Learning from few subjects.}
Sketch embeddings also support classification when labelled distributions are scarce. On AML, each subject is represented on $L=32$ random slices of its $N=10^4$ cells, either by the $LN=320{,}000$ uncompressed sorted projections or by a sketch: the $M$ atoms and $M-1$ masses of every slice for the optimal sketch, and $2M-1$ cell means, the same number, for equal-mass cells. We report results for $M=2$ and $M=8$, resulting in stored values $96$ and $480$, respectively. As a baseline, we concatenate each subject's empirical mean and the unique entries of its covariance matrix in the original $d=7$ cell-feature space. Inside each of the $25$ training folds of \cref{tab:compression} we draw $2$, $4$, $8$ or $16$ AML subjects with normal subjects in the ratio of the dataset, ten times, and fit class-balanced $\ell_2$-regularized logistic regression. \Cref{tab:amlmain_logreg} reports the test AUC, and a linear SVM leads to the same conclusions (\cref{app:svm}).
Every representation reaches a training AUC of $1$, while their test AUCs reveal differences in generalization to held-out subjects.

\begin{table}[t]
\centering\small
\caption{\textbf{AML, logistic regression from few labelled subjects.} Test AUC (\%), mean $\pm$ standard error over the $25$ folds, with ten draws of the training subjects per fold. $L=32$ random slices (seed $0$), $N=10^4$ cells. Normal subjects are added in the ratio of the dataset; ``all'' is the whole training fold, about $34$ AML subjects. Stored values: number of values stored per subject. Linear SVM in \cref{tab:amlmain_svm}.%
}
\label{tab:amlmain_logreg}
{\begin{tabular}{@{}lcccccc@{}}
\toprule
 & & \multicolumn{5}{c}{AML subjects in training}\\
\cmidrule(l){3-7}
summary & stored values & 2 & 4 & 8 & 16 & all\\
\midrule
uncompressed & 320{,}000 & $92.7\pm0.6$ & $95.4\pm0.4$ & $97.1\pm0.4$ & $98.1\pm0.4$ & $98.6\pm0.4$\\
mean, covariance & 35 & $92.4\pm0.8$ & $96.2\pm0.5$ & $97.1\pm0.6$ & $97.9\pm0.5$ & $98.1\pm0.5$\\
\midrule
equal-mass & 96 & $92.4\pm0.6$ & $95.1\pm0.5$ & $96.7\pm0.5$ & $97.6\pm0.4$ & $97.9\pm0.5$\\
ours & 96 & $\mathbf{96.8\pm0.4}$ & $\mathbf{98.1\pm0.2}$ & $\mathbf{98.8\pm0.2}$ & $\mathbf{99.3\pm0.2}$ & $\mathbf{99.6\pm0.1}$\\
\addlinespace
equal-mass & 480 & $92.9\pm0.6$ & $95.6\pm0.5$ & $97.3\pm0.4$ & $98.2\pm0.4$ & $98.5\pm0.4$\\
ours & 480 & $\mathbf{96.6\pm0.4}$ & $\mathbf{97.9\pm0.3}$ & $\mathbf{98.6\pm0.3}$ & $\mathbf{99.1\pm0.3}$ & $\mathbf{99.2\pm0.3}$\\
\bottomrule
\end{tabular}}
\end{table}

The optimal sketch has a higher test AUC than the uncompressed
representation at every training size, demonstrating an advantage when labelled subjects are scarce. The sketch provides a compact, adaptive representation that generalizes better with the same linear classifier. The improvement is not explained by feature count alone. Equal-mass cells at matched storage remain within $1.2$ points of the uncompressed representation, and mean and covariance features do not consistently improve on it. These results suggest that the adaptive representation contributes to the observed generalization gains beyond dimensionality reduction alone.

\noindent\textbf{Sketches as a pooling layer.}\label{sec:pooling}
\begin{figure}[!t]
\centering
\begin{minipage}[c]{0.5\textwidth}
\centering
\resizebox{\linewidth}{!}{\input{figures/fig_pooling_bars.tex}}
\end{minipage}\hfill
\begin{minipage}[c]{0.46\textwidth}
\caption{ModelNet40, following  \citep{naderializadeh2021}: no backbone (top), and MLP backbone (bottom). Test accuracy (\%), mean and standard deviation (error bars) over $3$ seeds. Equal-mass and ours: sketches with $M=4$ cells per slice; the number above each pair is ours minus equal-mass.
Lines: GAP, PMA and FSPool, which use no slices, with $\pm$ one standard deviation shaded.}
\label{fig:pooling}
\end{minipage}
\end{figure}
The sketch can also serve as the pooling layer of a network trained end to end. We follow the ModelNet40 protocol and official code of \citet{naderializadeh2021}. We use a per-point MLP, or no backbone (Identity)\citep{deepset}, $L$ learned linear slices of its output, a pooling layer, and a linear classifier. Their PSWE layer is an equal-mass summary at full resolution ($1024$ cells per slice) followed by learned rank weights that reduce each slice to one number. We apply the same layer, with the same reference and rank weights, to a sketch with $4$ cells per slice, either equal-mass cells or the optimal sketch, so the two sketches differ only in their partition. For $M=N$ both sketches reproduce PSWE. We therefore only compare the two sketches at $M=4$. The optimal sketch is computed exactly in every forward pass; gradients flow through its atoms, which are cell means, while the partition is held fixed.

Without a backbone the slices act on the raw coordinates, and the optimal sketch is more accurate than equal-mass cells at every $L$, by $1.7$ to $4.6$ points (\cref{fig:pooling}). With the MLP backbone it is more accurate for $L\ge16$, by $0.4$ to $1.5$ points, and less accurate at $L=4$, where it varies most across seeds. With the MLP backbone, the optimal sketch comes within $2.5$ points of PMA and $1.8$ points of FSPool \citep{zhang2020fspool} at $L=256$, against $4.0$ and $3.4$ points for equal-mass cells, while representing each projected distribution with four cells. Without a backbone, these layers output only three numbers per cloud and reach at most $34\%$, while the optimal sketch reaches $80\%$ at $L=256$.

\section{Conclusion}\label{sec:conclusion}
We studied compact sliced-Wasserstein sketches via optimal one-dimensional quantization. Our partition-mismatch identity characterizes how independently chosen cells affect pairwise distances and yields error certificates for centroid sketches. Experiments show that optimal sketches can preserve uncompressed sliced-distance classification performance under substantial compression. Minimizing quantization error does not guarantee the best distance approximation or downstream performance, and adaptive sketches require more storage than equal-mass sketches at the same atom count. Future work targets faster construction and sharper distance-error guarantees for optimal quantizers.

\acknowledgments
This work was supported in part by NSF CAREER Award No. 233989 and NSF DMS Award No. 2603773. The authors also gratefully acknowledge the computing resources provided by the NVIDIA Academic Grant Program.

\bibliographystyle{plainnat}
\bibliography{references,references_extra}

\clearpage
\appendix

\section{Proofs}\label{app:proofs}
\subsection{Proof of \cref{cor:certificates}}\label{app:proofCorollary}
Write the boundary term of \eqref{eq:identity} as
\[
 B=2\sum_j(b_{j+1}-b_j)\varphi_\nu(s_j)+2\sum_k(c_{k+1}-c_k)\varphi_\mu(t_k),
\]
so that $W_2^2(\mu,\nu)-D^2=\norm{r_\mu-r_\nu}^2+B$. For a cell $J_k=(a,b]$ with centroid $c_k$, the zero-cell-mean condition gives $\varphi_\nu(s)=\int_a^s(Q_\nu(x)-c_k)\,dx$ for $s\in[a,b]$. For $s\in(a,b)$, the average of a nondecreasing function over $(a,s]$ is no larger than its average over $(a,b]$, so $\varphi_\nu(s)\le0$; the primitive is zero at both endpoints. The same holds for $\varphi_\mu$. Since all atom increments are nonnegative, $B\le0$.

Using $\int_a^b r_\nu=0$, write
\[
 \varphi_\nu(s)=\int_a^b r_\nu(x)
 \left(\mathbf1_{(a,s]}(x)-\frac{s-a}{b-a}\right)dx.
\]
The squared $L^2(a,b)$ norm of the function in parentheses is $(s-a)(b-s)/(b-a)$. Cauchy--Schwarz therefore gives
\[
 \abs{\varphi_\nu(s)}\le
 \sqrt{\frac{(s-a)(b-s)}{b-a}}\,\varepsilon_{\nu,k}=\omega_\nu(s)
 \le\tfrac12\sqrt{q^\nu_{\max}}\,\varepsilon_\nu,
\]
and the analogous bounds hold for $\mu$. Summing against the nonnegative atom increments, whose sums are $\Delta_\mu$ and $\Delta_\nu$, yields $-H_{\mathrm{cell}}\le B\le0$ and $H_{\mathrm{cell}}\le H_0$. Together with $0\le\norm{r_\mu-r_\nu}^2\le S^2$, this proves \eqref{eq:certB} for either budget. Taking square roots yields $\sqrt{(D^2-H)_+}\le W_2(\mu,\nu)\le\sqrt{D^2+S^2}$, and the triangle inequality gives $W_2(\mu,\nu)\ge D-S$, proving \eqref{eq:distance-certificate-interval}. To evaluate $H_{\mathrm{cell}}$, each boundary $s_j$ of $\hat\mu$ must be located in
its containing cell $J_{k(j)}$ of $\hat\nu$. Since both boundary lists are sorted,
$k(j)$ is nondecreasing in $j$, so a single merge of the two lists finds every
$k(j)$; the same merge with the roles of $\mu$ and $\nu$ exchanged locates every
$t_k$ in a cell of $\hat\mu$. Hence $H_{\mathrm{cell}}$ costs $O(M_\mu+M_\nu)$ time. \qed %

\subsection{Regularity refinements of \cref{cor:certificates}}\label{app:regularity}

Here we illustrate how \cref{cor:certificates} can be strengthened when more assumptions are made on the quantiles. While this is not directly applicable to discrete measures, it provides some intuition for the more general approximation scheme considered here.

\begin{corollary}[Lipschitz quantiles]\label{cor:lipschitz}
Suppose $Q_\mu,Q_\nu$ are Lipschitz with constants $A_\mu,A_\nu$. Then the boundary budget
\[
 H_A=\tfrac14\Delta_\mu A_\nu(q^\nu_{\max})^2
    +\tfrac14\Delta_\nu A_\mu(q^\mu_{\max})^2
\]
may replace $H$ in \eqref{eq:certB} and \eqref{eq:distance-certificate-interval}. Furthermore, $\varepsilon_\rho^2\le A_\rho^2(q^\rho_{\max})^2/12$ for $\rho\in\{\mu,\nu\}$, and consequently
\[
 \abs{W_2^2(\mu,\nu)-D^2}
 \le\max\Bigl\{H_A,\frac{(A_\mu q^\mu_{\max}+A_\nu q^\nu_{\max})^2}{12}\Bigr\}.
\]
For fixed Lipschitz constants, this is $O(M^{-2})$ whenever both largest cell masses are $O(M^{-1})$.
\end{corollary}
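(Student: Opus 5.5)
The plan is to reuse the proof of \cref{cor:certificates} verbatim, swapping out only the Cauchy--Schwarz bound on the residual primitives. Write $B$ for the boundary term, so that \cref{thm:identity} gives $W_2^2(\mu,\nu)-D^2=\norm{r_\mu-r_\nu}^2+B$ with $B\le0$. The target is the pointwise bound
\[
 \abs{\varphi_\nu(s)}\le\tfrac{A_\nu}{2}\,\frac{(s-a)(b-s)}{1}\le\tfrac18A_\nu\,(b-a)^2
 \quad\text{for } s\in J_k=(a,b].
\]
Summed against the nonnegative increments $b_{j+1}-b_j$, whose total is $\Delta_\mu$, and multiplied by the factor $2$ in $B$, this gives the contribution $\tfrac14\Delta_\mu A_\nu(q^\nu_{\max})^2$. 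The symmetric argument gives the $\mu$ term, so $B\ge-H_A$.

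To obtain the pointwise bound, split the cell at $s$. Let $\bar L$ and $\bar R$ be the averages of $Q_\nu$ over $(a,s]$ and $(s,b]$. The centroid is the mass-weighted average $c_k=\bigl((s-a)\bar L+(b-s)\bar R\bigr)/(b-a)$. Hence
\[
 \varphi_\nu(s)=(s-a)(\bar L-c_k)=\frac{(s-a)(b-s)}{b-a}(\bar L-\bar R).
\]
Next, $\bar R-\bar L$ is the average of $Q_\nu(y)-Q_\nu(x)$ over $x\in(a,s]$ and $y\in(s,b]$. By the Lipschitz property this is at most $A_\nu$ times the average of $y-x$, which equals $A_\nu(b-a)/2$. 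This yields $\abs{\varphi_\nu(s)}\le\tfrac{A_\nu}{2}(s-a)(b-s)\le A_\nu q_k^2/8$. Linear quantiles show the bound is tight. This decomposition is the step I expect to carry the argument, and it is the only new ingredient.

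For the residual energy, on a cell of mass $q$ the quantity $q^{-1}\int(Q_\rho-c)^2$ is the variance of $Q_\rho(U)$ with $U$ uniform on the cell. Writing the variance as $\tfrac12\E(Q_\rho(U)-Q_\rho(U'))^2$ with $U'$ an independent copy, it is at most $A_\rho^2\Var U=A_\rho^2q^2/12$. Summing $q\cdot A_\rho^2q^2/12$ over cells and using $\sum q=1$ gives $\varepsilon_\rho^2\le A_\rho^2(q^\rho_{\max})^2/12$.

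Hence $S^2\le(A_\mu q^\mu_{\max}+A_\nu q^\nu_{\max})^2/12$. Combining this with $-H_A\le W_2^2-D^2\le S^2$ gives the stated two-sided bound and the max bound. The interval \eqref{eq:distance-certificate-interval} follows by taking square roots exactly as in \cref{cor:certificates}. Finally, $\Delta_\mu,\Delta_\nu\le A_\mu,A_\nu$ are bounded, so both quantities are $O(M^{-2})$ when the largest cell masses are $O(M^{-1})$.
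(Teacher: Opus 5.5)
Your proposal is correct and follows the paper's proof essentially step for step. Your split-average identity $\varphi_\nu(s)=\frac{(s-a)(b-s)}{b-a}(\bar L-\bar R)$ is the paper's double-integral formula $-\varphi_\nu(s)=\frac{1}{b-a}\int_a^s\int_s^b(Q_\nu(y)-Q_\nu(x))\,dy\,dx$, bounded by the same Lipschitz estimate, and your energy bound uses the same variance identity. Your explicit remark that $\Delta_\rho\le A_\rho$ usefully supplies the reason the budget is $O(M^{-2})$, a step the paper leaves implicit.
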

\begin{proof}
Write $B$ for the sum of the two boundary terms in \eqref{eq:identity}, so that
$W_2^2(\mu,\nu)-D^2=\norm{r_\mu-r_\nu}^2+B$. On a cell $J_k=(a,b]$, the centroid formula gives
\[
 -\varphi_\nu(s)=\frac1{b-a}\int_a^s\int_s^b\bigl(Q_\nu(y)-Q_\nu(x)\bigr)\,dy\,dx
 \le\frac{A_\nu}{2}(s-a)(b-s)\le\frac{A_\nu(b-a)^2}{8}.
\]
Since $\varphi_\nu\le0$ and the atom increments $b_{j+1}-b_j$ are nonnegative with sum
$\Delta_\mu$,
\[
 0\le-2\sum_{j}(b_{j+1}-b_j)\,\varphi_\nu(s_j)
 \le2\cdot\frac{A_\nu(q^\nu_{\max})^2}{8}\sum_{j}(b_{j+1}-b_j)
 =\tfrac14\Delta_\mu A_\nu(q^\nu_{\max})^2 .
\]
The same argument with the roles of $\mu$ and $\nu$ exchanged bounds the second sum in
$B$ by $\tfrac14\Delta_\nu A_\mu(q^\mu_{\max})^2$. Adding the two bounds gives
$-H_A\le B\le0$, so the proof of \cref{cor:certificates} applies with $H_A$ in place of
$H$. For the energy, the variance identity on a cell of mass $q_k=b-a$ gives
\[
 \varepsilon_{\nu,k}^2=\frac1{2q_k}\int_a^b\!\!\int_a^b\bigl(Q_\nu(x)-Q_\nu(y)\bigr)^2\,dx\,dy
 \le\frac{A_\nu^2q_k^3}{12}.
\]
Summing over cells and using $\sum_kq_k=1$ yields $\varepsilon_\nu^2\le A_\nu^2(q^\nu_{\max})^2/12$; the argument for $\mu$ is identical. The absolute bound follows from \eqref{eq:certB} and the resulting bound on $S^2$.
\end{proof}

\begin{lemma}[widths of optimal quantization cells]\label{lem:cellwidth}
Let $\rho$ have density $f$ with $0<c\le f\le C$ almost everywhere on an interval of length $\ell$, vanishing outside it. For $M\ge6$, every optimal free-weight $M$-atom quadratic quantizer has clipped Voronoi cells of width at most $\kappa\ell/M$, where $\kappa=4^{4/3}(C/c)^{4/3}$. Thus
\[
 q_{\max}\le\frac{C\kappa\ell}{M},\qquad
 \abs{Y-\hat Y}\le\frac{\kappa\ell}{M}\ \text{almost surely},\qquad
 \varepsilon_\rho\le\frac{\ell}{2M},
\]
where $(Y,\hat Y)$ is the quantile coupling of $\rho$ with its sketch. Also, $Q_\rho$ is $c^{-1}$-Lipschitz.
\end{lemma}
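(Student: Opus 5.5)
Write the support interval as $[\alpha,\alpha+\ell]$, the optimal atoms as $b_1<\dots<b_K$ with $K\le M$, and the clipped Voronoi cells as $V_j=[m_{j-1},m_j]$, where $m_j=(b_j+b_{j+1})/2$, $m_0=\alpha$ and $m_K=\alpha+\ell$.

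The easy parts come first. The Lipschitz claim holds because $F_\rho$ is absolutely continuous with $F_\rho'=f\ge c$ on the interval, so its inverse $Q_\rho$ has slope at most $1/c$. The bound $\varepsilon_\rho\le\ell/(2M)$ follows by comparing with the uniform grid: placing $M$ atoms at the midpoints of $M$ equal subintervals of length $\ell/M$ moves every point by at most $\ell/(2M)$, so $\varepsilon_M^\star(\rho)^2\le(\ell/2M)^2$. The two remaining consequences follow from the width bound $w_j:=m_j-m_{j-1}\le\kappa\ell/M$. We have $q_j=\int_{V_j}f\le Cw_j$, which gives $q_{\max}$. Under the quantile coupling, $Y$ and $\hat Y=b_j$ both lie in $V_j$ (the centroid of $V_j$ lies in $V_j$), so $\abs{Y-\hat Y}\le w_j$.

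The main step, and the expected obstacle, is the uniform width bound. Since optimality gives $K=M$ when the support is infinite, suppose some cell has width $w>\kappa\ell/M$. I would argue by a local perturbation: adding a new atom inside the wide cell gives a large gain, and removing an atom elsewhere costs little. First, splitting $V_j$ into halves with atoms at their conditional means reduces the cell distortion. Lower-bounding the density by $c$ and using that the best two-point quantizer of a measure with density in $[c,C]$ on a length-$w$ interval beats the one-point one by at least a constant times $c\,(c/C)w^3$ (a variance computation: on each half the conditional variance drops by a fixed fraction of $w^2$), the gain is $G\gtrsim c^2w^3/C$. Second, I need an atom whose removal is cheap. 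Since the $M$ cells have total width $\ell$, at least half of them have width at most $2\ell/M$, and I pick two adjacent such cells $V_i,V_{i+1}$ away from $V_j$ (this is possible once $M\ge6$). Merging them into one atom at the merged centroid costs at most $C(w_i+w_{i+1})^3/12\le C(4\ell/M)^3/12$, which is the upper bound on the variance of a density bounded by $C$ on an interval. Optimality forces $G\le$ cost, i.e. $c^2w^3/C\lesssim C\ell^3/M^3$, so $w\lesssim (C/c)^{2/3}\ell/M$. Tracking the constants should produce $\kappa=4^{4/3}(C/c)^{4/3}$; the extra power of $C/c$ absorbs my crude split-gain estimate. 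If the gain constant does not cooperate, I would bound the gain instead via the mass of $V_j$: $q_j\ge cw$ and the within-cell variance is at least $c\,w^3/(12C)$ scaled appropriately.

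Finally, I would check the bookkeeping. The perturbed quantizer uses at most $M$ atoms, so its distortion is at least the optimal distortion. Cells adjacent to the perturbed ones only improve under re-assignment to nearest atoms, so computing the cost with the fixed modified partition is an upper bound.
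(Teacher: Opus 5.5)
Your overall plan matches the paper's: split the wide cell at its midpoint into its two conditional centroids, merge a cheap adjacent pair elsewhere, and invoke optimality. You handle the consequences ($q_{\max}$, $\abs{Y-\hat Y}$, $\varepsilon_\rho\le\ell/(2M)$, the Lipschitz bound) as the paper does. Your merge cost $C(w_i+w_{i+1})^3/12$, the total variance of the merged cell, is valid and sharper than the paper's $CW^3/4$. However, the two steps that carry the lemma are not established.

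The first is the choice of the pair. ``At least half of the cells have width at most $2\ell/M$'' does not give two \emph{adjacent} such cells: for $M=7$ the narrow cells could occupy positions $1,3,5,7$. There are two ways to repair this.
\begin{enumerate*}[label=(\roman*)]
\item Use that $V_j$ itself is wider than $\kappa\ell/M$. Then the other cells share total width below $\ell(1-\kappa/M)$, so fewer than $(M-\kappa)/2$ of them exceed $2\ell/M$. Since $\kappa\ge3$, the narrow ones outnumber the at most $(M+1)/2$ pairwise non-adjacent positions among $M-1$ cells in at most two chains.
\item Count as the paper does. The remaining $M-1$ cells form at most two chains containing at least $(M-3)/2$ disjoint adjacent pairs, so some pair has combined width $W\le2\ell/(M-3)\le4\ell/M$ for $M\ge6$.
\end{enumerate*}

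The second, and main, gap is the split gain, which you assert rather than derive. ``On each half the conditional variance drops by a fixed fraction of $w^2$'' is not the mechanism. Your fallback also fails: a lower bound on the one-atom distortion of $V_j$ does not bound the \emph{decrease} from splitting. Comparing $cw^3/12$ with the two-atom grid bound $Cw^3/48$ helps only when $C<4c$.

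What is needed is a lower bound on the between-cell term $\frac{q_1q_2}{q}(m_2-m_1)^2$ in the decomposition of the distortion of $V_j$. The halves have $q_i\ge cw/2$ and $q\le Cw$. Each half-centroid lies at least $c\int_0^h t\,dt/(Ch)=ch/(2C)$ from the midpoint, where $h=w/2$. Hence $m_2-m_1\ge cw/(2C)$, and the gain is at least $c^4w^3/(16C^3)$. With your merge bound and $W\le4\ell/M$ this yields $w\le(4/3)^{1/3}\cdot4\,(C/c)^{4/3}\ell/M\le\kappa\ell/M$; the paper's cruder $CW^3/4$ gives $\kappa$ exactly. The same strictly positive gain also proves the $K=M$ that you take for granted.

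Your claimed order $c^2w^3/C$ is in fact correct, but it needs two sharper estimates. One is $\frac{q_1q_2}{q}\ge\frac12\min_iq_i\ge cw/4$. The other is an extremal bound: a half-centroid lies at least $h/(1+\sqrt{C/c})$ from the midpoint, with the extreme case being density $C$ next to the midpoint and $c$ beyond. Together these would improve the exponent of $C/c$ to $2/3$. Without some such computation, ``tracking the constants should produce $\kappa$'' leaves the lemma's explicit constant unproved.
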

\begin{proof}
An optimum uses exactly $M$ nonempty cells: otherwise, splitting any positive-width cell into two and replacing its centroid by the two conditional centroids strictly reduces the error while using at most $M$ atoms. The cells partition the support interval, up to endpoints, and each atom is its cell centroid.

Fix a cell $I$ of width $w$ and mass $q\le Cw$. Split it at its midpoint into halves $I_1,I_2$ with masses $q_i\ge cw/2$ and centroids $m_1<m_2$. On a half of length $h=w/2$, the centroid's distance from the inner endpoint is at least $c\int_0^ht\,dt/(Ch)=ch/(2C)$, so $m_2-m_1\ge cw/(2C)$. By the decomposition $\int_I(x-m)^2\,d\rho=\sum_i\int_{I_i}(x-m_i)^2\,d\rho
+\frac{q_1q_2}{q}(m_2-m_1)^2$, where $m$ is the centroid of $I$, replacing $I$ by its two halves decreases the distortion by
\[
 \frac{q_1q_2}{q}(m_2-m_1)^2\ge\frac{c^4w^3}{16C^3}.
\]
Excluding $I$, the remaining $M-1$ cells form at most two chains and contain at least $(M-3)/2$ disjoint adjacent pairs. Their widths sum to at most $\ell$, so one pair has combined width $W\le2\ell/(M-3)\le4\ell/M$, using $M\ge6$. Merging that pair into one cell at its centroid increases the distortion by $\frac{q_jq_{j+1}}{q_j+q_{j+1}}(b_{j+1}-b_j)^2\le CW^3/4$. Splitting $I$ and merging this disjoint pair gives a competitor with $M$ cells, so optimality forces $c^4w^3/(16C^3)\le CW^3/4$, hence $w\le4^{1/3}(C/c)^{4/3}W\le\kappa\ell/M$. Its mass is at most $Cw$, proving the mass bound. As the centroid lies in its cell, the same width bound controls $\abs{Y-\hat Y}$ almost surely. Finally, $M$ equally spaced midpoint representatives on the support interval have pointwise nearest-neighbour error at most $\ell/(2M)$; assigning each point to its nearest representative gives an admissible free-weight quantizer, so optimality gives $\varepsilon_\rho\le\ell/(2M)$. The lower density bound gives $F(y)-F(x)\ge c(y-x)$ on the support interval, and inverting gives the Lipschitz bound on $Q_\rho$.
\end{proof}

\begin{corollary}[optimal sketches of bounded densities]\label{cor:density}
For $\rho\in\{\mu,\nu\}$, suppose $\rho$ has density $0<c_\rho\le f_\rho\le C_\rho$ almost everywhere on an interval of length $\ell_\rho$, vanishing outside it. Put $\kappa_\rho=4^{4/3}(C_\rho/c_\rho)^{4/3}$ and $\beta_\rho=C_\rho\kappa_\rho\ell_\rho$. If both sketches are optimal free-weight quantizers with $M\ge6$ atoms, then $\abs{W_2^2(\mu,\nu)-D^2}\le K_{\mu\nu}/M^2$, where
\[
 K_{\mu\nu}=\max\Bigl\{
 \tfrac14\Bigl(\frac{\ell_\mu\beta_\nu^2}{c_\nu}+\frac{\ell_\nu\beta_\mu^2}{c_\mu}\Bigr),
 \frac{(\ell_\mu+\ell_\nu)^2}{4}\Bigr\}.
\]
\end{corollary}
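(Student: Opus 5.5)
The plan is to combine \cref{cor:lipschitz} with \cref{lem:cellwidth}, applied separately to $\mu$ and $\nu$. Since both sketches are optimal free-weight quantizers with $M\ge6$ atoms of densities satisfying the hypotheses of \cref{lem:cellwidth}, that lemma gives, for $\rho\in\{\mu,\nu\}$:
\begin{itemize}[leftmargin=*,itemsep=1pt,topsep=2pt]
\item $q^\rho_{\max}\le\beta_\rho/M$;
\item $\varepsilon_\rho\le\ell_\rho/(2M)$;
\item $Q_\rho$ is Lipschitz with constant $A_\rho=c_\rho^{-1}$.
\end{itemize}
The Lipschitz property makes \cref{cor:lipschitz} applicable. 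The identity \eqref{eq:identity} then gives $W_2^2(\mu,\nu)-D^2=\norm{r_\mu-r_\nu}^2+B$, with $B\le0$ and $\norm{r_\mu-r_\nu}^2\ge0$. Hence $\abs{W_2^2-D^2}\le\max\{S^2,-B\}$, and it suffices to bound each term by $K_{\mu\nu}/M^2$.

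For the residual term I will use the sharper energy bound from \cref{lem:cellwidth} rather than the Lipschitz bound from \cref{cor:lipschitz}. This gives
\[
S^2=(\varepsilon_\mu+\varepsilon_\nu)^2\le\frac{(\ell_\mu+\ell_\nu)^2}{4M^2},
\]
which is exactly the second entry of the max.

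For the boundary term, \cref{cor:lipschitz} gives $-B\le H_A=\tfrac14\Delta_\mu A_\nu(q^\nu_{\max})^2+\tfrac14\Delta_\nu A_\mu(q^\mu_{\max})^2$. It remains to bound the atom spreads $\Delta_\mu=b_M-b_1$ and $\Delta_\nu$. Each atom is a cell centroid, so it lies in the support interval, which gives $\Delta_\rho\le\ell_\rho$. Substituting this, $A_\rho=1/c_\rho$, and $q^\rho_{\max}\le\beta_\rho/M$ yields
\[
H_A\le\frac14\Bigl(\frac{\ell_\mu\beta_\nu^2}{c_\nu}+\frac{\ell_\nu\beta_\mu^2}{c_\mu}\Bigr)\frac1{M^2}.
\]
This is the first entry of the max, and taking the maximum of the two bounds gives the claim.

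There is no real obstacle; the proof is bookkeeping. The one point to watch is to use the $\ell/(2M)$ energy bound for the residual term rather than the Lipschitz energy bound of \cref{cor:lipschitz}, since the latter would produce a different constant than the stated $K_{\mu\nu}$.
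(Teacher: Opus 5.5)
Your proposal is correct and follows essentially the same route as the paper's proof. Both arguments take $q^\rho_{\max}\le\beta_\rho/M$, $A_\rho=c_\rho^{-1}$ and $\varepsilon_\rho\le\ell_\rho/(2M)$ from \cref{lem:cellwidth}, use $\Delta_\rho\le\ell_\rho$ because centroids lie in the support interval, bound $H_A$ and $S^2$ separately, and conclude from \eqref{eq:certB} with the Lipschitz budget of \cref{cor:lipschitz}.
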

\begin{proof}
\Cref{lem:cellwidth} gives $q^\rho_{\max}\le\beta_\rho/M$, a Lipschitz constant $A_\rho=c_\rho^{-1}$ for $Q_\rho$, and $\varepsilon_\rho\le\ell_\rho/(2M)$. Since every centroid is in the support interval, $\Delta_\rho\le\ell_\rho$. Thus $H_A\le\frac1{4M^2}\bigl(\ell_\mu\beta_\nu^2/c_\nu+\ell_\nu\beta_\mu^2/c_\mu\bigr)$ and $S^2\le(\ell_\mu+\ell_\nu)^2/(4M^2)$, and \eqref{eq:certB} with the budget of \cref{cor:lipschitz} gives the result.
\end{proof}

\subsection{Sharpness of the boundary budget}\label{app:sharp}
The following example shows that the boundary budget of \cref{cor:certificates} cannot in general be replaced by a constant multiple of $S^2$.
\begin{example}[sharpness for general centroid sketches]\label{ex:sharp}
Let $\mu=\tfrac12U[-2,-1]+\tfrac12U[1,2]$ and use its equal-mass sketch with $M$ even. The middle boundary is $s_{M/2}=\tfrac12$, with atom jump $b_{M/2+1}-b_{M/2}=2+2/M$. Set $q=1/(M+1)$ and
\[
 \nu_M=(1-q)\mu+\tfrac q2\delta_{-h/2}+\tfrac q2\delta_{h/2},
 \qquad h=M^{-1/2}.
\]
For the equal-mass sketch of $\nu_M$ with $M+1$ cells, the middle cell consists exactly of the two added atoms. Its centroid is zero, so $\varphi_{\nu_M}(\tfrac12)=-qh/4$. The other boundary terms together are $O(M^{-2})$: away from the central jump the quantiles are linear with bounded slopes, while the two central boundaries of the second sketch lie in cells of $\mu$ with residual integrals $O(M^{-2})$. The residual term is also $O(M^{-2})$. Thus
\begin{align*}
 W_2^2(\mu,\nu_M)-D^2
 &=-\frac{(2+2/M)qh}{2}+O(M^{-2})\\
 &=-M^{-3/2}+O(M^{-2}),
\end{align*}
whereas $\varepsilon_\mu^2=1/(3M^2)$ and $\varepsilon_{\nu_M}^2=qh^2/4+O(M^{-2})$. This is a sequence of pairs with general centroid sketches, checked numerically in \cref{app:theory-validation}. Whether an $O(M^{-2})$ gap holds for all pairs of optimal sketches with $\varepsilon_\mu,\varepsilon_\nu=O(M^{-1})$ remains open.
\end{example}

\subsection{Proof of \cref{thm:sliced}}
Average \eqref{eq:certB} over the slices. Minkowski's inequality gives
\[
 \frac1L\sum_\ell(\varepsilon_{\mu,\ell}+\varepsilon_{\nu,\ell})^2
 \le(E_\mu+E_\nu)^2,
\]
which proves the signed squared bounds. For the unsquared bound, apply the reverse triangle inequality to the Euclidean norms of the vectors of slice distances, and then use the fact that
$\abs{W_2(\mu_\ell,\nu_\ell)-W_2(\hat\mu_\ell,\hat\nu_\ell)}
\le\varepsilon_{\mu,\ell}+\varepsilon_{\nu,\ell}$.

For empirical inputs, the same product-space triangle inequality gives
\[
 \abs{\SW_{2,\Theta}(\mu,\nu)-D_{\Theta,M}(\mu^n,\nu^n)}
 \le S_\mu^n+S_\nu^n+E_\mu^n+E_\nu^n.
\]
This holds for any empirical measures; independence and sampling assumptions enter only when bounding $S_\mu^n,S_\nu^n$ statistically.

For the Monte Carlo statement, put $m=\E_\theta Z=\SW_2^2$ and $\bar Z=L^{-1}\sum_\ell Z(\theta_\ell)=\SW_{2,\Theta}^2$. Independence gives $\E_\Theta(\bar Z-m)^2=\Var_\theta(Z)/L$. Also
\[
 0\le Z(\theta)\le K:=2\bigl(\E\norm X^2+\E\norm Y^2\bigr),
 \qquad X\sim\mu,\quad Y\sim\nu,
\]
so $\Var(Z)\le Km$. Cauchy--Schwarz gives the squared-error claim. If $m>0$,
\[
 \E_\Theta\abs{\sqrt{\bar Z}-\sqrt m}
 \le\frac{\E_\Theta\abs{\bar Z-m}}{\sqrt m}
 \le\sqrt{K/L}.
\]
If $m=0$, $Z=0$ almost surely and the error is zero. These are the usual projection-sampling bounds \citep{nadjahi2020}. \qed

\subsection{Proof of \cref{thm:rate}}
Fix $\theta\in\Sph$ and let $X_\theta=\theta^\top X$. The extended Pierce lemma \citep[Lemma~1]{luschgy2008} states that for every $\delta>0$ there is a constant $C_\delta$ such that for every real random variable $Y$ and every $M\ge1$, $\inf_{\abs{\alpha}\le M}\norm{Y-\hat Y^\alpha}_{L^2}\le C_\delta\norm{Y}_{L^{2+\delta}}M^{-1}$, where $\hat Y^\alpha$ is the nearest-neighbour projection onto the finite set $\alpha$; the left side is $\varepsilon_M^\star(\operatorname{law}(Y))$. Apply it to $Y=X_\theta$ and use $\abs{\theta^\top X}\le\norm X$, so $\norm{X_\theta}_{L^{2+\delta}}\le(\E\norm X^{2+\delta})^{1/(2+\delta)}$ uniformly in $\theta$. Averaging $\bigl(\varepsilon_M^\star(\theta_{\ell\#}\mu)\bigr)^2$ over $\ell$ gives the bound on $E_\mu$. (Translating $X$ by any fixed vector does not change $\varepsilon_M^\star$, so $\E\norm X^{2+\delta}$ may be replaced by $\min_a\E\norm{X-a}^{2+\delta}$; see \citealp[Thm.~1.9(b)]{lemaire2020}.) The limit statement is Zador's theorem in one dimension \citep[Thm.~6.2]{grafLuschgy2000}, valid under the same $2+\delta$ moment condition, which also guarantees $\int f_\theta^{1/3}<\infty$. Note that a finite second moment alone does not guarantee the stated rate \citep{grafLuschgy2000}. \qed

\subsection{Equal-mass approximation rates}\label{app:equalmass}

The following proposition compares the equal-mass sketch with the optimal quantizer
of a single one-dimensional law $\rho$, such as a projection $\theta_\#\mu$. Write
$\varepsilon_M^{\mathrm{eq}}(\rho)$ for the error of its $M$-cell equal-mass sketch and
$\varepsilon_M^\star(\rho)$ for the optimal $M$-atom error. Since equal weights are
admissible, $\varepsilon_M^\star(\rho)\le\varepsilon_M^{\mathrm{eq}}(\rho)$ for every $M$.

\begin{proposition}[equal-mass and optimal rates]\label{prop:equalmass}
Let $\rho$ have density $f$ and an absolutely continuous quantile function.
\begin{enumerate}[label=(\alph*),leftmargin=*]
\item If $\int_{\{f>0\}}f^{-1}<\infty$, then
 $M^2\bigl(\varepsilon_M^{\mathrm{eq}}\bigr)^2\to\frac1{12}\int_{\{f>0\}}f^{-1}$.
\item If, in addition, $\rho$ has a finite moment of order $2+\delta$ for some
 $\delta>0$, then
 $\lim M^2(\varepsilon_M^\star)^2=\frac1{12}\bigl(\int f^{1/3}\bigr)^3
 \le\frac1{12}\int_{\{f>0\}}f^{-1}$, with equality if and only if $\rho$ is uniform on
 an interval.
\item If $\rho$ is a nondegenerate Gaussian, then
 $(\varepsilon_M^{\mathrm{eq}})^2\asymp(M\log M)^{-1}$, whereas
 $(\varepsilon_M^\star)^2\asymp M^{-2}$.
\item If $\rho$ is the projection of the uniform distribution on the unit ball of
 $\R^d$ onto a line, so that $f(t)\propto(1-t^2)^{(d-1)/2}$ on $[-1,1]$, then
 $(\varepsilon_M^\star)^2\asymp M^{-2}$ and
 \[
 (\varepsilon_M^{\mathrm{eq}})^2\asymp
 \begin{cases}
 M^{-2},&d\le2,\\
 M^{-2}\log M,&d=3,\\
 M^{-(d+5)/(d+1)},&d\ge4,
 \end{cases}
 \]
 with constants depending on $d$.
\end{enumerate}
\end{proposition}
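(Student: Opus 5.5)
The proof rests on one representation. On an equal-mass cell $I_j=((j-1)/M,j/M]$ the residual energy is the within-cell variance of $Q$:
\[
 e_j=\frac{1}{2q}\int_{I_j}\!\int_{I_j}(Q(x)-Q(y))^2\,dx\,dy,
 \qquad
 \varepsilon_M^{\mathrm{eq}}(\rho)^2=\sum_j e_j .
\]
Since $Q$ is absolutely continuous, we write $Q(x)-Q(y)=\int_y^x Q'$ with $Q'=1/f(Q)$. This turns the energy into a Riemann-type sum for $\int_0^1 (Q')^2\,dt$, and the change of variables $t=F(x)$ gives $\int_0^1(Q')^2\,dt=\int_{\{f>0\}}f^{-1}$.

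\textbf{Part (a).} Cauchy--Schwarz gives the uniform upper bound
\[
 (Q(x)-Q(y))^2\le|x-y|\int_{I_j}(Q')^2,
 \qquad\text{hence}\qquad
 e_j\le\frac{q^2}{6}\int_{I_j}(Q')^2 .
\]
For the exact constant $1/12$, I will approximate $Q'$ in $L^2$ by a continuous (or step) function $g$. For the piecewise-linear primitive of $g$ one gets $M^2\sum_j e_j(g)\to\tfrac1{12}\int g^2$ by direct computation. I then transfer the limit to $Q$ using the triangle inequality for the seminorm $\bigl(\sum_j e_j(\cdot)\bigr)^{1/2}$, which is bounded by $\tfrac{1}{\sqrt6}M^{-1}\|\cdot\|_{L^2}$ by the estimate above.

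\textbf{Part (b).} The limit is Zador's theorem, as already invoked in the proof of \cref{thm:rate}. The inequality follows from H\"older on the support $S=\{f>0\}$:
\[
 \int_S f^{1/3}=\int_S f^{2/3}\,f^{-1/3}\le\Bigl(\int_S f\Bigr)^{2/3}\Bigl(\int_S f^{-1}\Bigr)^{1/3}
\]
with exponents $3/2$ and $3$, and $\int_S f=1$. Equality holds iff $f$ is constant on $S$. Since $Q$ is absolutely continuous, $S$ is essentially an interval, so $\rho$ is uniform on an interval.

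\textbf{Parts (c) and (d).} In both cases the optimal rate $M^{-2}$ comes from \cref{thm:rate} (Pierce) for the upper bound and from Zador's positive constant for the lower bound. Part (a) does not apply to the equal-mass rate because $\int f^{-1}=\infty$ for the Gaussian and for $d\ge3$. I will instead estimate $\sum_j e_j$ directly, using $e_j\asymp q^2\int_{I_j}(Q')^2$ on cells where $Q'$ varies by a bounded factor. Cutting off the two extreme cells, whose energy is estimated separately, leaves a truncated integral.

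For the Gaussian, $Q'(t)\sim 1/\bigl(t\sqrt{2\log(1/t)}\bigr)$ as $t\to0$. This gives
\[
 M^{-2}\int_{1/M}(Q')^2\,dt\asymp M^{-2}\cdot\frac{M}{\log M}.
\]
The tail cell contributes its conditional variance, which is $\asymp q/\log M=(M\log M)^{-1}$. Together these give $(M\log M)^{-1}$.

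For the ball projection, near $t=0$ we have $1+Q(t)\asymp t^{2/(d+1)}$, so $(Q')^2\asymp t^{2(1-d)/(d+1)}$. For $d=3$ this is $t^{-1}$, and the truncated integral gives $M^{-2}\log M$. For $d\ge4$ the truncated integral is $M^{-2}\cdot M^{(d-3)/(d+1)}=M^{-(d+5)/(d+1)}$, and the endpoint cell has variance $\asymp q\cdot q^{4/(d+1)}$ of the same order. For $d\le2$, part (a) applies directly since $\int f^{-1}<\infty$.

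The main obstacle is making the matching lower bounds rigorous in the non-integrable regimes. This requires the comparability $e_j\gtrsim q^2\int_{I_j}(Q')^2$ on cells near the endpoints, which I will get from monotonicity of $Q'$ near the tails.
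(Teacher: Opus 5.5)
Your proposal is correct. Parts (b)--(d) follow the paper's structure: H\"older with exponents $3$ and $3/2$ plus the interval argument for (b), the extreme tail cell for the Gaussian lower bound, and cell-by-cell variance comparisons near the endpoints of the ball projection.

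The genuine difference is in (a). The paper first identifies the equal-mass sketch with the best equal-weight approximation, and then quotes the asymptotic theorem of \citet{xu2019} with $r=2$. You instead prove the limit from scratch with a uniform-bound-plus-density argument. Your estimate $e_j\le\frac{q^2}{6}\int_{I_j}(Q')^2$ says that $\norm{(I-P_M)H}\le\norm{H'}/(\sqrt6\,M)$, where $P_M$ averages over equal-mass cells. You compute the constant $1/12$ exactly for step-function derivatives, and the triangle inequality for the seminorm $\norm{(I-P_M)\,\cdot\,}$ transfers it to $Q$. This argument is self-contained and needs no optimality property of the equal-mass sketch, since its error is by definition the cell-mean residual. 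As a by-product it gives the nonasymptotic bound $M^2(\varepsilon_M^{\mathrm{eq}})^2\le\frac16\int_{\{f>0\}}f^{-1}$ for every $M$.

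The same inequality reduces your upper bounds in (c) and (d) to one truncated integral $M^{-2}\int_{1/M}^{1-1/M}(Q')^2$ plus two extreme cells. The paper instead computes explicit widths: $w_k\asymp(kx_k)^{-1}$ from Mills-ratio bounds for the Gaussian, and $w_k\asymp M^{-\alpha}k^{\alpha-1}$ for the ball. The two bookkeepings are equivalent, since $w_k\asymp M^{-1}Q'(k/M)$ for $k\ge2$. For $d\le2$ you invoke (a) directly, which even gives the limiting constant; the paper sums $k^{4/(d+1)-2}$ for all $d$.

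Finally, the obstacle you flag is smaller than you state. The extreme cell alone gives the lower bound for the Gaussian and for $d\ge4$, so only $d=3$ needs a multi-cell lower bound. There, the two-sided law $Q'(t)\asymp t^{-1/2}$ on $(0,\tfrac12]$ follows immediately from $Q'=1/f(Q)$. It gives $e_k\ge\frac{q^3}{12}\min_{I_k}(Q')^2\gtrsim q^2/k$ for $2\le k\le M/2$, with no monotonicity argument needed.
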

In (c), and in (d) for $d\ge3$, $\int f^{-1}=\infty$, so (a) does not apply and the
equal-mass rate is slower than $M^{-2}$. The cost comes from low-density cells: in
the Gaussian tail and near the endpoints of the ball projection, a cell of mass $1/M$
must be wide. The Gaussian rate in (c) is sharp along $M=2^k$ by
\citet[Theorem~1]{giles2019}.

\begin{proof}
(a): For prescribed cell masses, the best $W_2$ approximation uses cell means of the quantile function \citep[Theorem~5.5]{xu2019}. Indeed, these are the $L^2$ projection onto the space of functions constant on the prescribed intervals, and the resulting cell means remain nondecreasing. Hence the equal-mass residual error $\varepsilon_M^{\mathrm{eq}}$ is the optimal error among measures with $M$ equal weights. Absolute continuity of the quantile and the stated integrability condition allow \citet[Theorem~5.15]{xu2019} to be applied with $r=2$.

(b) Under the moment condition, Zador's theorem in one dimension
\citep[Thm.~6.2]{grafLuschgy2000} gives
$\lim_{M\to\infty}M^2(\varepsilon_M^\star)^2=\frac1{12}\bigl(\int f^{1/3}\bigr)^3$.
To compare this constant with the one in (a), apply H\"older's inequality with
exponents $3$ and $3/2$ to $f^{1/3}=f^{-1/3}f^{2/3}$ on $\{f>0\}$:
\[
 \int f^{1/3}\le\Bigl(\int_{\{f>0\}}f^{-1}\Bigr)^{1/3}\Bigl(\int f\Bigr)^{2/3}
 =\Bigl(\int_{\{f>0\}}f^{-1}\Bigr)^{1/3}.
\]
Cubing gives the stated inequality; it is trivial when $\int_{\{f>0\}}f^{-1}=\infty$.
Equality in H\"older's inequality holds if and only if $f^{-1}$ and $f$ are
proportional on $\{f>0\}$, that is, $f$ is constant there. An absolutely continuous
quantile function has no jumps, so $\{f>0\}$ is an interval up to a null set, and
equality holds exactly when $\rho$ is uniform on an interval. This comparison concerns
the limiting constants, not the errors at finite $M$.

(c) By symmetry and scaling it suffices to treat the upper tail of $N(0,1)$. Number
the equal-mass cells from the right, so that cell $k$ has tail probabilities
$((k-1)/M,k/M]$. The outermost cell is the tail beyond $x_1=\Phi^{-1}(1-1/M)$, with
$x_1\sim\sqrt{2\log M}$, and the conditional variance of a Gaussian tail beyond $x$ is
$\asymp x^{-2}$ for large $x$. This cell alone contributes
$\frac1M\cdot x_1^{-2}\asymp(M\log M)^{-1}$, which gives the lower bound. 
For $2\le k\le M/4$, cell $k$ lies to the right of $x_k=\Phi^{-1}(1-k/M)\ge
\Phi^{-1}(3/4)>0$, and $x_k^2\asymp\log(M/k)$ for $k\le M/e$. By the Mills-ratio
bounds  $\frac{x}{1+x^2}\varphi(x)\le1-\Phi(x)\le\frac{\varphi(x)}{x}$ for $x>0$ \citep{gordon1941values}, its width is $\asymp(kx_k)^{-1}$, and its conditional
variance is at most the square of its width. The cells with $M/4<k\le 3M/4$ lie in
$[-\Phi^{-1}(3/4),\Phi^{-1}(3/4)]$, where the density is at least
$\varphi(\Phi^{-1}(3/4))$, so each has width $O(M^{-1})$ and together they contribute
$O(M^{-2})$.  Hence
\[
 (\varepsilon_M^{\mathrm{eq}})^2\lesssim\frac1M\Bigl(\frac1{\log M}
 +\sum_{2\le k\le M/2}\frac{1}{k^2\max\{\log(M/k),1\}}\Bigr)+O(M^{-2})
 \asymp\frac1{M\log M},
\]
since the sum is dominated by $k\le\sqrt M$, where $\log(M/k)\ge\tfrac12\log M$. %

(d): For the uniform ball, symmetry reduces the calculation to $t\in[0,1]$. There
$f_\theta(t)=c_d(1-t^2)^{(d-1)/2}\asymp(1-t)^{(d-1)/2}$, with constants depending on $d$. Thus the upper-tail mass satisfies $\bar F_\theta(t)\asymp(1-t)^{(d+1)/2}$. Put $\alpha=2/(d+1)$ and number the equal-mass cells from the right. For $k\le\lfloor M/2\rfloor$, cell $k$ covers tail probabilities $((k-1)/M,k/M]$ and has width
\[
 w_k\asymp M^{-\alpha}k^{\alpha-1}.
\]
For $k=1$ this follows directly from the tail-mass relation. For $k\ge2$, it follows by integrating the derivative of the inverse tail function, whose magnitude is comparable to $u^{\alpha-1}$. The conditional variance on each cell is comparable to $w_k^2$: its upper bound is $w_k^2/4$, and a lower bound with a positive $d$-dependent constant follows by comparison with the density $(1-t)^{(d-1)/2}$ and rescaling the cell. For this power density the normalized variance is bounded away from zero, including at the endpoint.

Each cell has mass $1/M$, so symmetry yields
\[
 (\varepsilon_M^{\mathrm{eq}})^2\asymp
 M^{-1-\frac4{d+1}}\sum_{k=1}^{\lfloor M/2\rfloor}
 k^{\frac4{d+1}-2}+O(M^{-2}),
\]
where any remaining central cell has bounded density and contributes at most $O(M^{-2})$. The sum is bounded for $d>3$, grows like $\log M$ for $d=3$, and grows like $M^{4/(d+1)-1}$ for $d<3$. Substitution gives the three stated squared-error rates. Zador's formula gives $(\varepsilon_M^\star)^2\asymp M^{-2}$ with a finite positive constant for every fixed $d$. \end{proof}

\subsection{Proof of \cref{prop:moments}}
Fix $\theta$ and suppress it from the notation for simplicity. Let $U$ be uniform on $(0,1)$ and set $X_\theta=Q_{\theta_\#\mu}(U)$ and $\hat X_\theta=\hat Q_{\theta_\#\mu}(U)$. Thus $X_\theta$ has the law of $\theta^\top X$, $\hat X_\theta$ has the sketch law, and this coupling realizes the one-dimensional quadratic Wasserstein distance. Let $C=j$ when $U\in I_j$, write $q_j=\abs{I_j}>0$, and let $\mathcal G=\sigma(C)$. By the centroid condition,
\begin{equation}\label{eq:proof-centroid-stationarity}
 \hat X_\theta=b_C=\E[X_\theta\mid\mathcal G].
\end{equation}
For $Z=X_\theta-\hat X_\theta$, it follows that $\E[Z\mid\mathcal G]=0$. Conditional Jensen gives $\E\hat X_\theta^2\le\E X_\theta^2<\infty$; hence $\hat X_\theta,Z\in L^2$. Consequently,
\[
 \E\hat X_\theta=\E X_\theta,\qquad
 \E[\hat X_\theta Z]=0,\qquad
 \E X_\theta^2=\E\hat X_\theta^2+\E Z^2.
\]
These prove \eqref{eq:centroid-moment-identities}, since $\E Z^2=\varepsilon_{\mu,\theta}^2$ \citep[Prop.~1.7, Cor.~1.8]{lemaire2020}.

For $f\in C^2(\R)$ with bounded second derivative, Taylor's formula with integral remainder gives
\[
 f(X_\theta)-f(\hat X_\theta)=f'(\hat X_\theta)Z+Z^2\int_0^1(1-t)f''(\hat X_\theta+tZ)\,dt.
\]
Bounded $f''$ implies that $f'$ has at most linear growth and $f$ at most quadratic growth. Thus all terms are integrable, and
\[
 \E[f'(\hat X_\theta)Z]=\E\bigl[f'(\hat X_\theta)\E[Z\mid\mathcal G]\bigr]=0.
\]
Taking expectations and using $\int_0^1(1-t)\,dt=1/2$ proves $\abs{\E f(X_\theta)-\E f(\hat X_\theta)}\le\tfrac12\norm{f''}_\infty\E Z^2$ \citep{pages1998,pages2003,lemaire2020}. This argument uses only the centroid condition, not global optimality.

Now assume $\norm X\le R$ almost surely. Since $\theta$ is a unit vector, $\abs{X_\theta}\le R$ almost surely, and \eqref{eq:proof-centroid-stationarity} implies $\abs{\hat X_\theta}\le R$ almost surely. The segment between $\hat X_\theta$ and $X_\theta$ therefore lies in $[-R,R]$. Applying the same Taylor argument to $f(x)=x^r$ on this interval, where $\sup_{\abs x\le R}\abs{f''(x)}=r(r-1)R^{r-2}$, proves \eqref{eq:uncorrected-moment-bound}. For even $r$, $x\mapsto x^r$ is convex on $\R$, so conditional Jensen also gives
\[
 \hat X_\theta^r=\bigl(\E[X_\theta\mid\mathcal G]\bigr)^r\le\E[X_\theta^r\mid\mathcal G].
\]
Taking expectations proves the asserted one-sided inequality.

To establish the correction, observe that
\begin{equation}\label{eq:proof-cell-energies}
 e_j=\E[Z^2\mathbf 1_{\{C=j\}}],\qquad
 \sum_j e_j=\E Z^2=\varepsilon_{\mu,\theta}^2.
\end{equation}
In particular, these are unnormalized per-cell energies; the conditional variance in cell $j$ is $e_j/q_j$. Since $\hat X_\theta=b_j$ on $\{C=j\}$,
\[
 \E[\hat X_\theta^{r-2}Z^2]=\sum_j b_j^{r-2}e_j.
\]
For $r=2$, formula \eqref{eq:corrected-moment-estimate} therefore coincides with \eqref{eq:centroid-moment-identities} and is exact. For $r\ge3$, Taylor's formula through the quadratic term gives
\begin{align*}
 X_\theta^r
 &=\hat X_\theta^r+r\hat X_\theta^{r-1}Z+\binom r2\hat X_\theta^{r-2}Z^2+\mathcal R_r,\\
 \mathcal R_r
 &=\frac{Z^3}{2}\int_0^1(1-t)^2\frac{d^3}{dx^3}x^r\bigg|_{x=\hat X_\theta+tZ}\,dt.
\end{align*}
As $\abs{\hat X_\theta+tZ}\le R$ and $\int_0^1(1-t)^2\,dt=1/3$,
\[
 \abs{\mathcal R_r}\le\binom r3R^{r-3}\abs Z^3.
\]
The linear term again has zero expectation. The expectation of the first and quadratic terms is exactly $\tilde m_{r,\theta}$, and hence
\begin{equation}\label{eq:proof-third-residual-bound}
 \abs{m_{r,\theta}-\tilde m_{r,\theta}}\le\binom r3R^{r-3}\E\abs Z^3.
\end{equation}
If $\abs Z\le h_\theta$ almost surely, then $\E\abs Z^3\le h_\theta\E Z^2$, which proves \eqref{eq:corrected-moment-bound}. Such a bound always exists under bounded support, since $\abs Z\le2R$; the sharper rate below requires a bound that decreases with $M$.

For completeness, the exact moment decomposition underlying this correction is
\[
 m_{r,\theta}=\hat m_{r,\theta}+\sum_{k=2}^r\binom rk\sum_j b_j^{r-k}e_{j,k},\qquad
 e_{j,k}:=\E[Z^k\mathbf 1_{\{C=j\}}].
\]
It follows by binomial expansion and $e_{j,1}=0$. The correction retains only the $k=2$ term, and \eqref{eq:proof-third-residual-bound} bounds the omitted terms jointly. It therefore requires no evaluation of residual moments of order three or higher.

Next let $\operatorname{Sym}^r(\R^d)$ carry the Frobenius inner product, and let $\R^L$ carry the Euclidean inner product. For $T_r=\E X^{\otimes r}$,
\[
 \langle T_r,\theta_\ell^{\otimes r}\rangle=\E(\theta_\ell^\top X)^r=m_{r,\theta_\ell}.
\]
Thus the true moment vector $m_r=(m_{r,\theta_\ell})_{\ell=1}^L$ satisfies $m_r=A_rT_r$. Define
\[
 \hat T_r=A_r^\dagger(\hat m_{r,\theta_\ell})_{\ell=1}^L,\qquad
 \tilde T_r=A_r^\dagger(\tilde m_{r,\theta_\ell})_{\ell=1}^L.
\]
If $A_r$ has full column rank, then $A_r^\dagger A_r$ is the identity on $\operatorname{Sym}^r(\R^d)$. Therefore, for either input moment vector $v$ and its least-squares reconstruction $T(v)$,
\[
 T(v)-T_r=A_r^\dagger(v-m_r),\qquad
 \norm{T(v)-T_r}_F\le\norm{A_r^\dagger}_{2\to F}\norm{v-m_r}_2.
\]
Applying the coordinatewise bounds \eqref{eq:uncorrected-moment-bound} and \eqref{eq:corrected-moment-bound} proves \eqref{eq:uncorrected-tensor-bound} and \eqref{eq:corrected-tensor-bound}, respectively. Here $\varepsilon_{\mu,\ell}=\varepsilon_{\mu,\theta_\ell}$.

To verify the generic rank claim, put $n_r=\dim\operatorname{Sym}^r(\R^d)=\binom{d+r-1}{r}$. The tensors $\theta^{\otimes r}$, $\theta\in\Sph$, span this space. Indeed, a symmetric tensor $T$ orthogonal to all of them defines the homogeneous polynomial $p_T(x)=\langle T,x^{\otimes r}\rangle$, which vanishes on the sphere. Homogeneity implies that $p_T$ vanishes on all of $\R^d$, and uniqueness of polynomial coefficients gives $T=0$. Consequently, some $n_r$ directions yield an invertible evaluation matrix. In a fixed basis, its determinant is a polynomial in the direction coordinates and does not vanish identically on $(\Sph)^{n_r}$. For $d\ge2$, its zero set is a proper algebraic subset of this product and has product surface measure zero; this is the exceptional set excluded by ``generic.'' For $d=1$, $n_r=1$ and every unit direction already gives a nonzero evaluation. Adding rows preserves full column rank. This is the polynomial reconstruction argument of \citet{bodmann2018}.

Finally, exact mean and covariance recovery require only $\mu\in\Pcal_2(\R^d)$. The directional means are $\theta_\ell^\top\E X$, so $d$ spanning directions determine $\E X$ uniquely. From the stored total residual energies and \eqref{eq:centroid-moment-identities}, the directional second moments are known exactly. If $A_2$ is injective, these determine $T_2=\E XX^\top$ exactly. Injectivity of $A_2$ also implies that the directions span $\R^d$: otherwise, for a nonzero $v$ orthogonal to every direction,
\[
 A_2(vv^\top)=\bigl((\theta_\ell^\top v)^2\bigr)_{\ell=1}^L=0,
\]
contradicting injectivity. The mean is therefore also recovered, and $\operatorname{Cov}(X)=T_2-(\E X)(\E X)^\top$ is exact. The generic rank result with $r=2$ requires $L\ge\binom{d+1}{2}$.

For the stated rates, fix $r$, $R$, and the directions. The bound $\varepsilon_{\mu,\theta}=O(M^{-1})$ in \eqref{eq:uncorrected-moment-bound} gives $O(M^{-2})$ error, while $h_\theta=O(M^{-1})$ together with \eqref{eq:corrected-moment-bound} gives $O(M^{-3})$ error. Under the bounded-density assumptions of \cref{lem:cellwidth}, every optimal Voronoi cell intersected with the support has width $O(M^{-1})$. Each centroid lies within its cell, so $\abs{X_\theta-\hat X_\theta}=O(M^{-1})$ almost surely. A uniform grid on the bounded support gives squared distortion $O(M^{-2})$; optimality gives the same upper bound for the optimal quantizer. Thus both conditions hold in that setting. For fixed directions with $A_r$ injective, the corresponding tensor bounds inherit the same rates. All identities refer to the law being quantized; if that law is empirical, they recover or approximate its empirical moments. \qed

\section{Implementation details}\label{app:impl}
This section gives the implementation details needed to reproduce the experiments of \cref{sec:experiments}. Benchmark experiments ran on a server with $128$ logical cores and four NVIDIA RTX A5000 GPUs.

\noindent\textbf{Main components.}
The optimal sketch of a slice is computed by exact dynamic programming. Weights are normalized to sum to one, zero-weight points are removed, and duplicate locations are merged after sorting; if at most $M$ distinct locations remain, the slice is returned unchanged. Sketch distances are computed by a monotone sweep that transports the smaller of the two current atom masses and advances the exhausted atom, in $O(M)$ time per slice. Uncompressed sliced distances use the same sweep on the sorted projections, or the mean squared difference of sorted projections when all distributions have the same number of equally weighted points.

\noindent\textbf{Classification (\cref{tab:compression}).}
Documents \citep{kusner2015} are bags of $300$-dimensional word embeddings, normalized to unit length and weighted by normalized word counts. We use the five provided train/test splits (one for Reuters) and hold out $20\%$ of each training split for validation. AML subjects \citep{aghaeepour2013} are represented by $10^4$ randomly subsampled cells of tube $6$ with all $7$ channels, each standardized with the mean and standard deviation of the training subjects of the fold. Validation scores come from a stratified $5$-fold cross-validation inside each training fold. For ModelNet40 shapes \citep{mn40} we use the official train/test split and hold out $20\%$ of the training shapes for validation. Table entries are means and standard deviations over splits or folds. A star marks a paired difference between ours and equal-mass: a paired $t$-test over the five document splits ($5\%$ level), a mean difference larger than two standard errors over the $25$ AML folds, and a $95\%$ paired bootstrap interval over the test shapes of ModelNet40 ($1000$ resamples) that excludes zero. The error ratio is the median over distributions and slices of the ratio of the equal-mass to the optimal residual energy, averaged over splits or folds.

The $L=16$ slices are linear maps with rows normalized to unit length, trained on the training split only (without the validation part) with the neighbourhood-component loss \citep{goldberger2004}: a softmax over $-a\,\SW_{2,\Theta}^2$ to the other distributions of the batch, with a learned scale $a$. We use Adam with learning rate $10^{-2}$. For documents and ModelNet40, batches have $256$ distributions, distances during training use $64$ quantile levels, and training runs for at most $3000$ steps with early stopping on the validation $k$NN error. For AML, the slices are trained on the training part of the first inner fold, with early stopping on its validation part. Each step uses all of these subjects, represented by random subsets of $512$ cells, for at most $300$ steps. The five-fold validation scores used to choose $k$ and $M$ therefore include subjects seen during slice training, and test subjects are never used. The slices are then frozen and shared by all methods. Each distance is used in a $k$-nearest-neighbour classifier, with $k\in\{1,\dots,19\}$ chosen on validation.

For documents, WMD uses the precomputed distances of \citet{sato2022}. For AML and ModelNet40, exact $W_1$ with Euclidean cost is computed by a network-simplex solver on the first $10^3$ cells and the first $256$ points, respectively. All baselines use the same splits, validation and $k$NN classifier.

\noindent\textbf{Learning from few subjects (\cref{tab:amlmain_logreg,tab:amlmain_svm}).}
This experiment uses the AML subjects ($N=10^4$ cells) and the $25$ folds of \cref{tab:compression}, with $L=32$ random directions (seed $0$) shared by all subjects. A subject is represented by a feature vector: its $LN=320{,}000$ sorted projections (uncompressed); for every slice, the $M$ atoms and $M-1$ masses of the optimal sketch (ours) or the $2M-1$ cell means of equal-mass cells (equal-mass). Inside each training fold, we draw $2$, $4$, $8$ or $16$ AML subjects together with normal subjects in the ratio of the dataset, ten times per size; ``all'' uses the whole training fold. On each draw we fit a class-balanced, $\ell_2$-regularized logistic regression with penalty $C=1$ or linear SVM, with one regularization setting for all representations, and compute the AUC on the test subjects of the fold. Tables report the mean and standard error over the $25$ folds.

\noindent\textbf{Pooling layer (\cref{fig:pooling}).}
We use the official code and ModelNet40 protocol of \citet{naderializadeh2021} and change only the pooling layer. We report the test accuracy after the last epoch, computed as in their code (mean over test batches of $32$ shapes), as mean and standard deviation over three seeds. The PSWE layer compares the sorted projections of each slice with a learned reference of $1024$ values and reduces the differences with $1024$ learned weights to one number per slice. Our variants replace the sorted projections by the step quantile of a sketch with $M=4$ cells, i.e., each projected point is replaced by the mean of its cell, using equal-mass cells ($256$ points each) or the optimal partition. 
GAP, PMA \citep{lee2019settransformer} and FSPool \citep{zhang2020fspool} are taken from the same code and pool the backbone features directly.

\input{theory_validation_appendix}

\section{Additional experiments}\label{app:exp}

\subsection{Sweeps over $L$ and $M$}\label{app:sweeps}
\Cref{fig:sweepM-docs,fig:sweepM-rest,fig:sweepL-docs,fig:sweepL-rest} compare the optimal sketch with $M$ atoms and equal-mass cells with $M$ cells at the same $L$, as in \cref{tab:compression}, on the learned slices and runs behind \cref{tab:compression}, for every $L$ and $M$. Every configuration is shown, each with its own $k$ chosen on validation, and no further selection is made. The optimal sketch stores $L(2M-1)$ numbers per distribution and equal-mass cells $LM$. \Cref{fig:sweepM-docs,fig:sweepM-rest} fix $L$ and sweep $M$; \cref{fig:sweepL-docs,fig:sweepL-rest} fix $M$ and sweep $L$. Bands are standard errors over splits or folds (documents, AML) and bootstrap $95\%$ intervals over test shapes (ModelNet40); the dotted line is the uncompressed $\SW_{2,\Theta}$ on the same slices. Over the $23$ pairs of benchmark and $L$, the optimal sketch is more accurate in all $23$ at $M=4$ and in $21$ at $M=8$, by $0.8$ and $0.6$ points on average; six differences at each $M$ are distinguishable in its favour, and one at $M=8$ in favour of equal-mass cells. At $M=2$ it is ahead in $15$ pairs, by $2.0$ points on average and by $10$ points on ModelNet40, but behind on Recipe at every $L$, by $2.3$ points. From $M=16$ on, the mean difference is $0.1$ points or less, and both rules are close to the uncompressed distance.

\begin{figure}[!htbp]
\centering
\includegraphics[width=\textwidth]{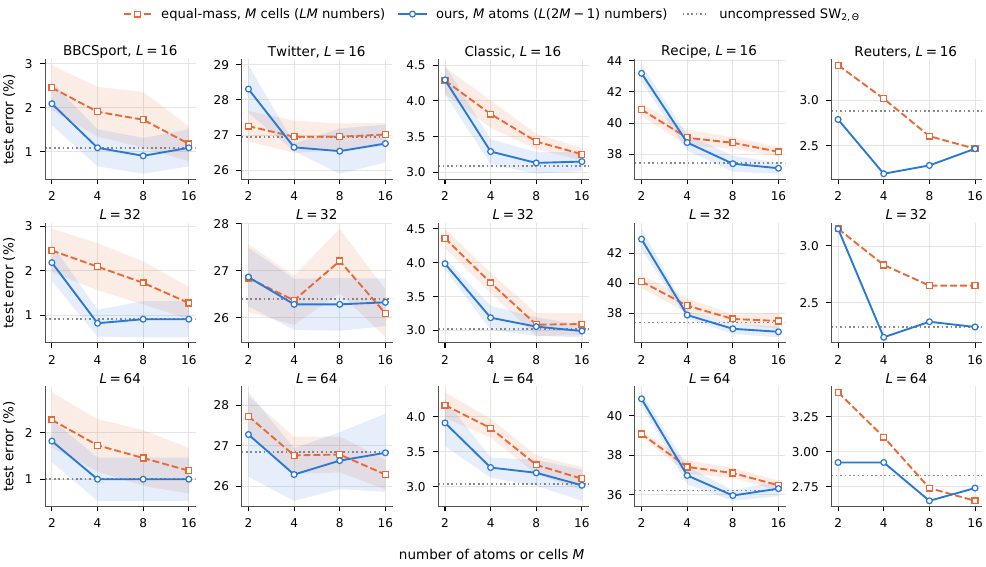}
\caption{Documents, learned slices: optimal sketch with $M$ atoms against equal-mass cells with $M$ cells, fixed $L$ (rows), sweep over $M$; test error.}
\label{fig:sweepM-docs}
\end{figure}

\begin{figure}[!htbp]
\centering
\includegraphics[width=\textwidth]{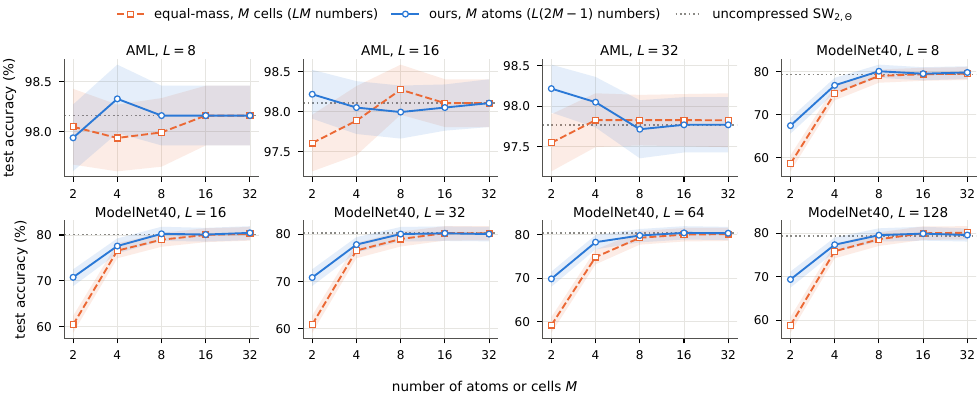}
\caption{AML and ModelNet40, learned slices: optimal sketch with $M$ atoms against equal-mass cells with $M$ cells, fixed $L$, sweep over $M$; test accuracy.}
\label{fig:sweepM-rest}
\end{figure}

\begin{figure}[!htbp]
\centering
\includegraphics[width=\textwidth]{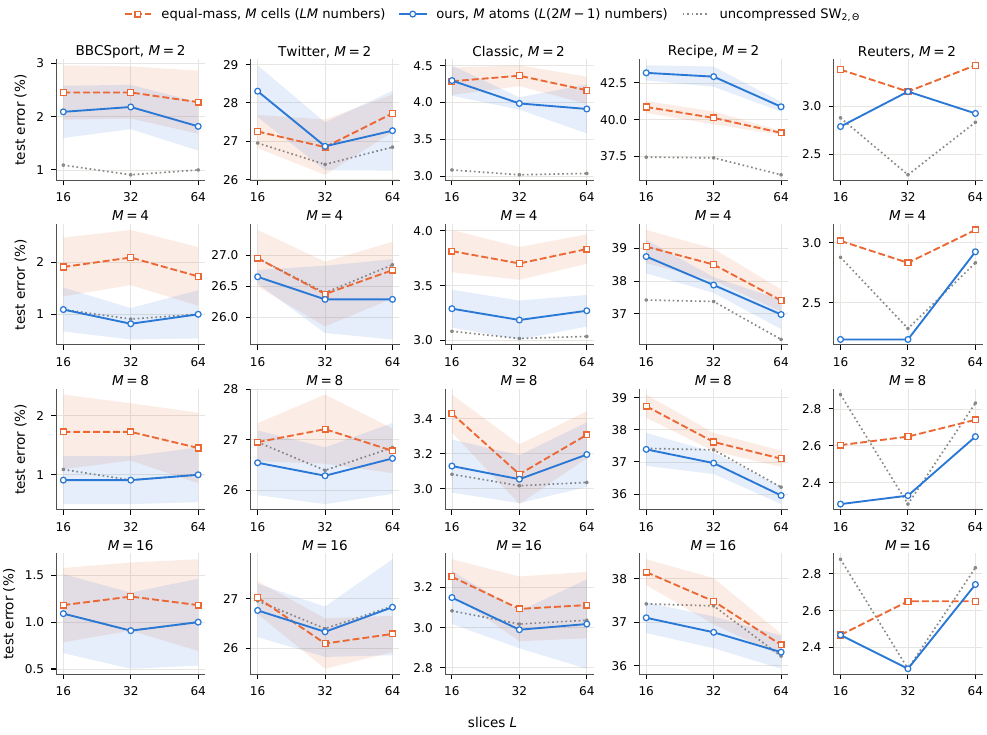}
\caption{Documents, learned slices: optimal sketch with $M$ atoms against equal-mass cells with $M$ cells, fixed $M$ (rows), sweep over $L$; test error.}
\label{fig:sweepL-docs}
\end{figure}

\begin{figure}[!htbp]
\centering
\includegraphics[width=\textwidth]{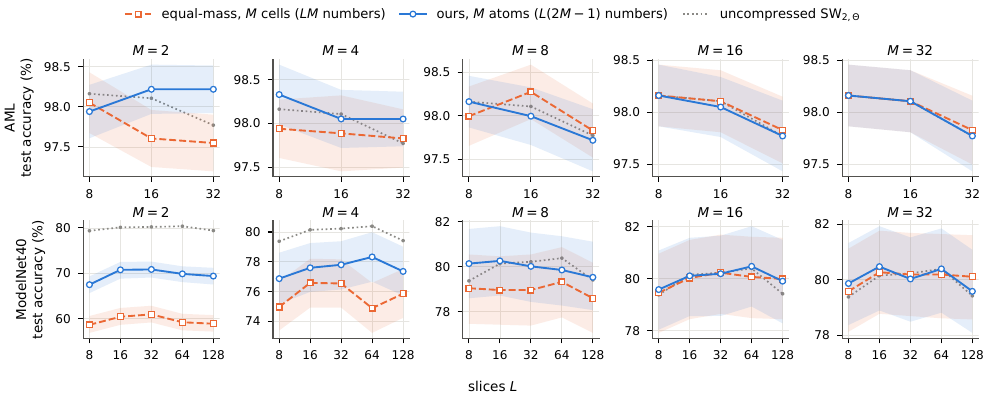}
\caption{AML and ModelNet40, learned slices: optimal sketch with $M$ atoms against equal-mass cells with $M$ cells, fixed $M$, sweep over $L$; test accuracy.}
\label{fig:sweepL-rest}
\end{figure}

\subsection{Few labelled subjects: linear SVM}\label{app:svm}
\Cref{tab:amlmain_svm} repeats the comparison of \cref{tab:amlmain_logreg} with a class-balanced linear SVM in place of logistic regression, on the same slices, folds and draws of training subjects. The conclusions are those of logistic regression. The optimal sketch has a higher test AUC than the uncompressed representation at every training size, and the paired difference falls from $3.9\pm0.5$ points with two AML subjects to $0.9\pm0.3$ with the whole fold. 

\begin{table}[!htbp]
\centering\small
\caption{\textbf{AML, linear SVM from few labelled subjects.} As \cref{tab:amlmain_logreg}, with a class-balanced linear SVM in place of logistic regression. Stored values: number of values stored per subject.}
\label{tab:amlmain_svm}
\begin{tabular}{@{}lcccccc@{}}
\toprule
 & & \multicolumn{5}{c}{AML subjects in training}\\
\cmidrule(l){3-7}
summary & stored values & 2 & 4 & 8 & 16 & all\\
\midrule
uncompressed & 320{,}000 & $92.2\pm0.6$ & $94.9\pm0.4$ & $96.8\pm0.4$ & $97.9\pm0.4$ & $98.5\pm0.4$\\
mean, covariance & 35 & $92.6\pm0.7$ & $95.4\pm0.6$ & $96.1\pm0.7$ & $97.0\pm0.6$ & $97.2\pm0.5$\\
\midrule
equal-mass & 96 & $91.2\pm0.7$ & $93.7\pm0.4$ & $95.7\pm0.5$ & $96.7\pm0.5$ & $97.8\pm0.5$\\
ours & 96 & $\mathbf{96.1\pm0.4}$ & $\mathbf{97.4\pm0.3}$ & $\mathbf{98.4\pm0.2}$ & $\mathbf{99.0\pm0.2}$ & $\mathbf{99.4\pm0.2}$\\
\addlinespace
equal-mass & 480 & $92.0\pm0.6$ & $94.7\pm0.4$ & $96.7\pm0.4$ & $97.8\pm0.4$ & $98.4\pm0.4$\\
ours & 480 & $\mathbf{96.2\pm0.4}$ & $\mathbf{97.5\pm0.3}$ & $\mathbf{98.2\pm0.3}$ & $\mathbf{98.8\pm0.3}$ & $\mathbf{98.9\pm0.3}$\\
\bottomrule
\end{tabular}
\end{table}

\end{document}

%% file: theory_validation_appendix.tex
\section{Numerical validation of the theoretical results}\label{app:theory-validation}
\input{results/theory_validation/numbers.tex}

We give a separate numerical check of every theorem, proposition, corollary,
lemma and example in the paper. These experiments check exact identities and
finite inequalities, and illustrate the asymptotic statements on distributions
whose assumptions are known analytically. Finite computations do not establish
a universal bound or an asymptotic theorem. In particular, an $O(M^{-p})$ upper
bound need not be attained. \Cref{tab:theory-coverage} maps the results to the
experiments and their recorded data.

\begin{table}[!htbp]
\centering\small\setlength{\tabcolsep}{4pt}
\caption{Theory-to-experiment correspondence. CSV files are in
\texttt{results/theory\_validation/}; all are regenerated by one command.}
\label{tab:theory-coverage}
\begin{tabular}{@{}p{.23\textwidth}p{.34\textwidth}p{.38\textwidth}@{}}
\toprule
Result & Numerical check & Recorded CSV files\\
\midrule
\Cref{thm:identity} & Independent residual/boundary decomposition; both signs of the distance gap & \texttt{certificates\_atomic}\\
\Cref{cor:certificates} & $H_{\rm cell}\le H_0$, signed bounds and complete distance interval & \texttt{certificates\_atomic}\\
\Cref{thm:sliced} & Averaged signed and unsquared bounds; shared linear and nonlinear maps & \texttt{certificates\_sliced}\\
\Cref{thm:rate} & Pierce scaling, Gaussian sphere supremum, Zador constants & \texttt{continuous\_rates}, \texttt{sliced\_rate}\\
\Cref{prop:equalmass} & Gaussian and all three ball-projection rate regimes & \texttt{continuous\_rates}\\
\Cref{cor:lipschitz} & Known quantile Lipschitz constants and nonmatching cells & \texttt{lipschitz}\\
\Cref{lem:cellwidth} & Optimal-cell width, mass, pointwise error and residual bounds & \texttt{cell\_widths}\\
\Cref{cor:density} & Squared-distance gap and the stated constant $K_{\mu\nu}$ & \texttt{bounded\_density}\\
\Cref{ex:sharp} & $M^{3/2}(W_2^2-D^2)\to-1$; expansion exceeds $S^2$ & \texttt{sharpness}\\
\Cref{prop:moments} & Scalar/tensor bounds, correction, exact recovery, rank and conditioning & \texttt{moments\_scalar}, \texttt{moments\_tensor}, \texttt{moments\_rates}, \texttt{moments\_gaussian\_recovery}, \texttt{moments\_rank}, \texttt{moments\_conditioning}\\
Sampling/directions & Empirical-population decomposition and $L^{-1/2}$ Monte Carlo errors & \texttt{certificates\_sampling}, \texttt{certificates\_directions}\\
\bottomrule
\end{tabular}
\end{table}

\subsection{Partition mismatch and computable certificates}
\paragraph{Identity (\cref{thm:identity}).}
We generate weighted Gaussian, separated-mixture, Student-$t_3$, atomic and
degenerate pairs, with $37$--$60$ and $48$--$71$ input points. The inputs include
duplicate locations and zero weights. For each of $24$ trials per family, we
use optimal sketches, equal-mass sketches, a shared nonuniform mass partition,
and two independently chosen nonuniform partitions. The first cell count cycles
through $1,2,4,8$; the optimal and independent-partition pairs use one additional
cell for the second measure. Nonuniform masses are Dirichlet with parameter
$2$ per cell. Optimal sketches may use fewer atoms when the input support is small.

All quantile integrals are finite sums on the common refinement of the two input
and two sketch partitions. The distance gap, residual norm, boundary terms and
per-cell energies are evaluated separately, and the resulting energies and
transport costs are checked against the production routines. Across
\TVIdentityChecks{} pairs, the largest identity discrepancy divided by
$1+W_2^2+D^2+S^2+H_0$ is \TVIdentityError. There are \TVExpansions{}
expansions and \TVContractions{} contractions exceeding absolute tolerance
$2\cdot10^{-10}$, with the remaining cases numerically exact. Residual primitives
are nonpositive and vanish at their own cell boundaries; shared partitions
have zero boundary term and contract the distance. As construction checks,
$24$ small problems agree with exhaustive contiguous-partition enumeration,
and an atomic input is recovered exactly once the budget reaches its support size.

\paragraph{Certificates (\cref{cor:certificates}).}
On the same pairs we independently evaluate $H_0$, $H_{\rm cell}$, both signed
inequalities and both endpoints of \eqref{eq:distance-certificate-interval}.
No inequality is violated beyond the scaled tolerance $2\cdot10^{-10}$;
the largest positive scaled discrepancy is \TVMaxViolation.
\Cref{fig:theory-certificates} displays the identity and the location of each
nonexact optimal-sketch gap within its certified interval. Pairs with $S^2+H_{\rm cell}$ below the tolerance, which
are recovered exactly, remain checked but are omitted from both normalized panels.

\begin{figure}[!htbp]
\centering
\includegraphics[width=\textwidth]{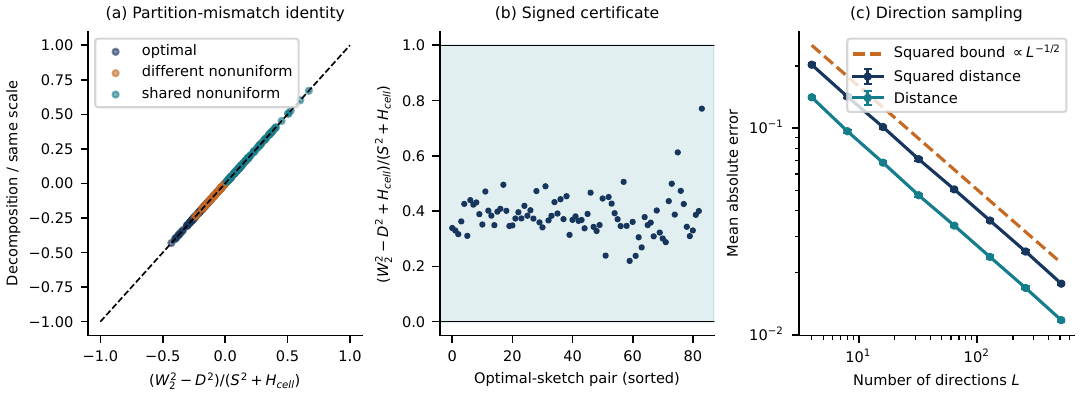}
\caption{Exact atomic identity and certificate checks, and direction sampling.
Left: independently computed sides of the identity coincide. Middle: the signed
gap lies in $[-H_{\rm cell},S^2]$, rescaled to $[0,1]$. Right: Monte Carlo mean
absolute errors with bars of two standard errors over \TVDirectionReplicates{}
independent direction sets; the dashed line is the analytic squared-distance bound.}
\label{fig:theory-certificates}
\end{figure}

\subsection{Sliced, sampling and direction certificates}
\paragraph{Sliced bounds (\cref{thm:sliced}).}
We use $16$ weighted pairs in $\R^3$, with $67$ and $83$ points, and eight shared
slices. Three choices give \TVSlicedChecks{} comparisons: random linear
directions, linear directions fitted from the mean difference of separate
synthetic training samples and then frozen, and shared nonlinear maps
$g_\ell(x)=\theta_\ell^\top x+0.15(\theta_\ell^\top x)^2$.
Optimal four-atom sketches are paired with equal-mass five-cell sketches.
Both averaged signed bounds and the unsquared bound pass in every case.
The data-dependent directions test the theorem's applicability after selecting
shared maps; this is not a test of the quality of a learning algorithm.

\paragraph{Sampling and direction errors.}
For the deterministic sampling decomposition, two finite weighted populations
of $9$ and $11$ points in $\R^3$ make the population reference exactly computable.
We draw $n=16,64,256,1024$ observations, with $16$ repetitions at each size,
and build four-atom sketches on eight directions. All \TVSamplingChecks{}
errors are below $E_\mu^n+E_\nu^n+S_\mu^n+S_\nu^n$; the largest ratio is
\TVSamplingRatio. This checks the decomposition without imposing a sampling-rate claim.

For direction sampling alone, let $\mu=N(0,I_3)$ and $\nu=N(a,I_3)$ with
$\norm a=1.3$. Then $Z(\theta)=(\theta^\top a)^2$,
$\E Z=\norm a^2/3$ and $\Var Z=\norm a^4(1/5-1/9)$ are known exactly.
For $L=4,8,\ldots,512$, we use \TVDirectionReplicates{} independent sets of
directions. Log--log slopes over $L=64,128,256,512$ are
\TVDirectionSquaredSlope{} for the squared-distance mean absolute error and
\TVDirectionSlope{} for the distance error, consistent with $L^{-1/2}$.
The script checks the Monte Carlo estimates against the analytic expectation
bounds with a five-standard-error allowance; the plot reports two-standard-error bars.

\subsection{Optimal and equal-mass quantization rates}
\paragraph{Continuous numerical protocol.}
We use $N(0,1)$ and the one-dimensional projections of uniform balls in
$d=1,2,3,5,9$. Thus there is no empirical quantile grid or sampling floor.
Cell masses and first moments are analytic. Optimal quantizers solve the
continuous centroid/Voronoi equations for $M=4,8,\ldots,512$; global optimality
of an exact solution follows from log-concavity for these particular laws
\citep{trushkin1982,kieffer1983}. We report finite-precision solutions, not
interval-certified optima. The largest centroid residual is \TVRootResidual.
Centered cell integration uses $32$ and $64$ Gauss nodes, analytic Gaussian
tail variances and a change of variables at ball endpoints. The largest
relative energy change is \TVQuadratureChange. A perturbed initialization,
tighter root tolerance and $96$ nodes at $M=512$ change optimal energies by
at most \TVSolverChange. These checks are recorded in
\texttt{continuous\_refinement.csv}.

\paragraph{Free weights (\cref{thm:rate}).}
We plot $M^2(\varepsilon_M^\star)^2/C_{\rm Zador}$, where
$C_{\rm Zador}=\frac1{12}(\int f^{1/3})^3$ is evaluated analytically.
All six curves approach one (\cref{fig:theory-rates,tab:theory-rates}).
For the standard Gaussian $C_{\rm Zador}=\sqrt3\pi/2$.
We also compare $N(0,\operatorname{diag}(1,4))$ and
$N((0.4,-0.3)^\top,\operatorname{diag}(2,0.7))$ on seven equally spaced
directions in the plane. Direct transport between the scaled Gaussian centroid
sketches is compared with the analytic sliced distance. Its error is below
$E_\mu+E_\nu$, and $M(E_\mu+E_\nu)$ stabilizes; the actual error in this
example decays faster. For the first Gaussian the sphere supremum is exactly
$2\varepsilon_M^\star(N(0,1))$ and $\E\norm X^4=59$, so the recorded ratio
$M\sup_\theta\varepsilon_M^\star(\theta_\#\mu)/59^{1/4}$ also directly
checks the uniform-direction scaling. We do not estimate the universal
Pierce constant from these examples.

\paragraph{Equal masses (\cref{prop:equalmass}).}
Using the same continuous laws, with exact cell boundaries $Q(j/M)$, we extend
the equal-mass experiment to $M=2^{18}$. The normalization of the squared error
is $M\log M$ for the Gaussian, $M^2$ for balls in $d=1,2$,
$M^2/\log M$ for $d=3$, and $M^{(d+5)/(d+1)}$ for $d=5,9$.
The rescaled curves remain of constant order, while the equal/free error ratio
grows for the Gaussian and $d\ge3$. These are finite-range illustrations;
logarithmic corrections converge slowly. The uniform law gives identical
free and equal-mass errors and the exact rescaled value $1/3$.
For $d=2$, the finite equal-mass limit is $\pi^2/24\simeq0.41123$;
the observed $M^2(\varepsilon_M^{\rm eq})^2$ at $M=2^{18}$ is
\TVEqualBallTwoError, approaching this constant from below.

\begin{figure}[!htbp]
\centering
\includegraphics[width=.94\textwidth]{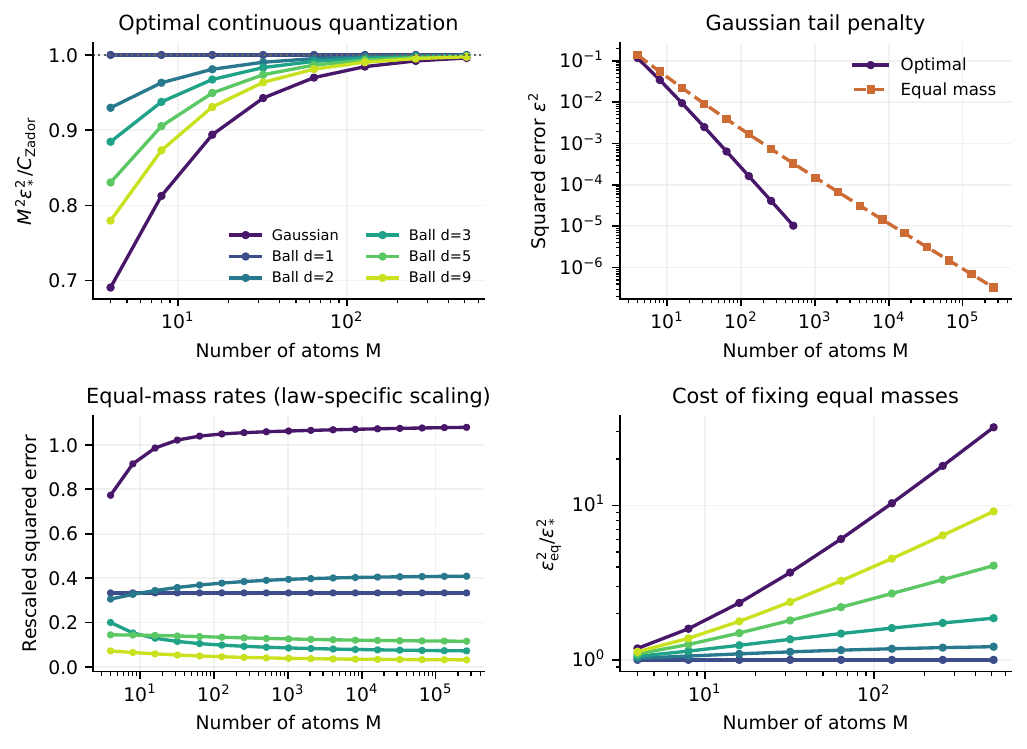}
\caption{Continuous quantization rates. Top left: comparison with analytic
Zador constants. Top right: Gaussian squared errors. Bottom left: equal-mass
errors under the law-specific normalizations in \cref{tab:theory-rates}; colors
match the top-left legend. Bottom right: the cost of prescribing equal masses
at the same $M$, on the common range $M\le512$.}
\label{fig:theory-rates}
\end{figure}

\begin{table}[!htbp]
\centering\footnotesize\setlength{\tabcolsep}{4pt}
\caption{Endpoints of the continuous rate experiment: $M=512$ for optimal
quantizers and $M=2^{18}$ for equal-mass cells. The last column is the
equal-mass squared error multiplied by the stated normalization; it is not
claimed to equal its limiting constant.}
\label{tab:theory-rates}
\input{results/theory_validation/rates_table.tex}
\end{table}

\subsection{Regularity and optimal-cell geometry}
\paragraph{Lipschitz quantiles (\cref{cor:lipschitz}).}
Take $Q_\mu(u)=u+0.2u^2$ and $Q_\nu(u)=0.3+0.7u+0.4u^3$, with known
constants $A_\mu=1.4$, $A_\nu=1.9$. Use equal-mass cells for $\mu$ and
alternating masses proportional to $1+0.35(-1)^{j-1}$, $j=1,\ldots,M$, for $\nu$.
Polynomial quadrature evaluates the energies and true distance exactly up
to rounding. For $M=6,8,16,\ldots,512$, both energy bounds and both signed
distance bounds hold. \Cref{fig:theory-regularity} shows the predicted
$M^{-2}$ squared-distance scale with genuinely mismatched partitions.

\paragraph{Cell widths (\cref{lem:cellwidth}).}
For $f_+(x)=e^x/(e-1)$ and $f_-(x)=e^{1-x}/(e-1)$ on $[0,1]$,
$c=1/(e-1)$ and $C=e/(e-1)$ are analytic density bounds. For each optimum at
$M=6,8,16,\ldots,512$, we check the largest clipped Voronoi width, cell mass,
pointwise residual and total residual against the lemma's stated constants.
All checks pass. For $f_+$ at $M=512$, $Mw_{\max}=\TVWidthScaled$ and
$Mq_{\max}=\TVMassScaled$, illustrating the asserted $M^{-1}$ scales.
The quantile Lipschitz bound follows directly from the known density minimum;
it is not estimated from samples.

\paragraph{Bounded densities (\cref{cor:density}).}
We compare the two preceding optimal quantizers and integrate the true
quantile distance independently. The explicit constant is
$K_{\mu\nu}=\TVDensityConstant$, and every squared gap is below
$K_{\mu\nu}/M^2$. At $M=512$, $M^2|W_2^2-D^2|=\TVDensityScaledGap$.
The general constant is conservative: the plot additionally gives the much
smaller computable $M^2H_A$, while the CSV retains the explicit $K_{\mu\nu}$.

\begin{figure}[!htbp]
\centering
\includegraphics[width=\textwidth]{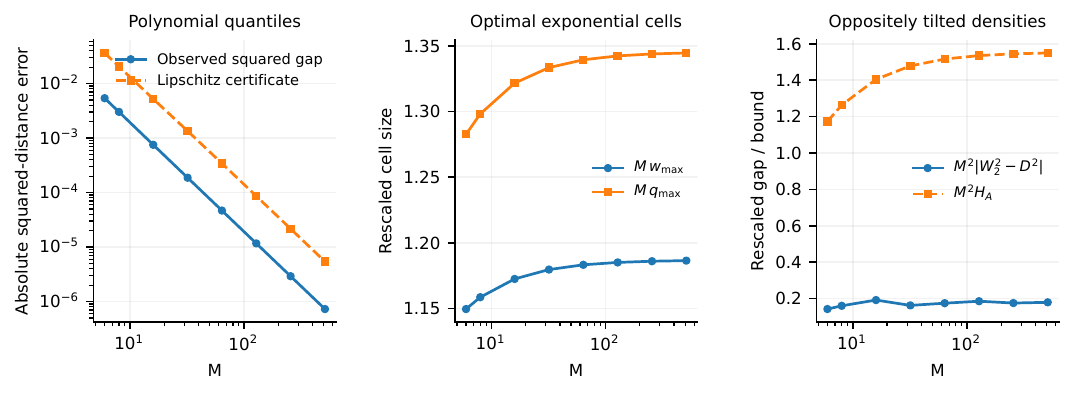}
\caption{Checks under analytic regularity assumptions: polynomial quantiles
and their Lipschitz certificate; optimal widths and masses for $f_+$; and the
squared-distance gap between $f_+$ and $f_-$ with the Lipschitz boundary budget.
No empirical quantile is treated as Lipschitz.}
\label{fig:theory-regularity}
\end{figure}

\subsection{Sharpness of the boundary budget}
For \cref{ex:sharp}, we use exactly the stated $M$-dependent pair, with $M$
even, $q=1/(M+1)$, $h=M^{-1/2}$, and equal-mass sketches with $M$ and $M+1$
cells. The two quantiles are piecewise affine or constant. Splitting at every
atom and cell boundary makes two-point Gaussian quadrature exact for the
squared quantities; four-point integration independently checks the result.
The energies also agree with the closed forms
\[
 \varepsilon_\mu^2=\frac1{3M^2},\qquad
 \varepsilon_{\nu_M}^2=\frac{1-q}{3M^2}+\frac{qh^2}{4}.
\]
For $M=8,16,\ldots,2^{16}$, the rescaled signed gap approaches $-1$.
At $M=\TVSharpM$, it equals \TVSharpLimit{}, while
$(D^2-W_2^2)/S^2=\TVSharpRatio$ (\cref{fig:theory-sharpness}).
The remaining terms after the central boundary contribution have bounded
$M^2$ scaling, as recorded in the CSV. This experiment concerns general
centroid sketches; it does not establish a counterexample for optimal quantizers.

\begin{figure}[!htbp]
\centering
\includegraphics[width=\textwidth]{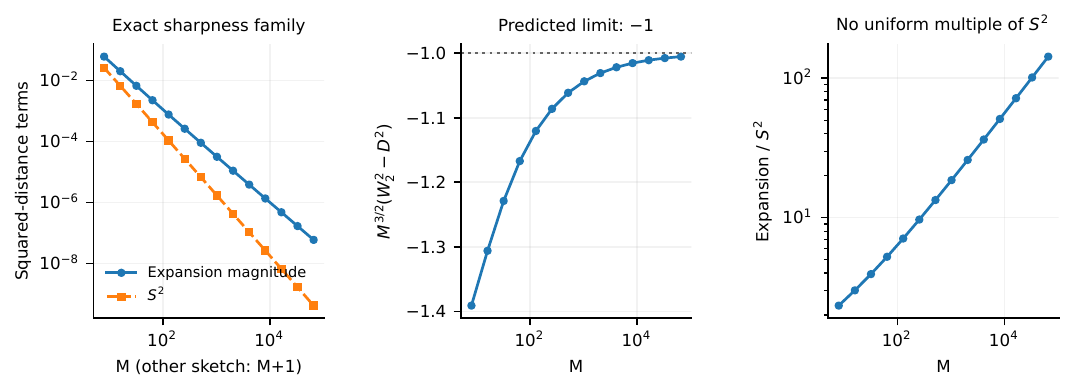}
\caption{Exact integration of the sharpness family. The expansion decays more
slowly than the residual-only budget, its $M^{3/2}$ rescaling approaches $-1$,
and its ratio to $S^2$ grows. The target $\nu_M$ varies with $M$.}
\label{fig:theory-sharpness}
\end{figure}

\subsection{Moment fidelity, correction and recovery}
\paragraph{Finite scalar and tensor bounds (\cref{prop:moments}).}
We use three weighted clouds of $181$ bounded points in $\R^3$, $30$ generic
directions, $M=2,4,8,16$, and optimal, equal-mass and nonuniform prescribed
cells. Exact atom/cell intersections give per-cell energies and the actual
largest residual $h_\theta$. The \TVScalarChecks{} scalar checks cover means,
second moments, raw and corrected orders $2$--$6$, even-moment signs, and the
$C^2$ bound for $x^2$, $\sin(1.7x)$, $\cos(0.7x)$ and $\sqrt{1+x^2}$.
The largest centroid-identity discrepancy is \TVMomentIdentityError; all
inequalities pass, and the largest corrected-error/bound ratio is
\TVCorrectedMomentRatio. For the \TVTensorChecks{} tensor configurations of
orders $2$--$4$, the largest uncorrected and corrected ratios are
\TVTensorRawRatio{} and \TVTensorCorrectedRatio. Tensor coordinates use the
Frobenius-orthonormal symmetric basis with factors $\sqrt{r!/\alpha!}$;
the norm is checked independently against explicitly accumulated full tensors.

\paragraph{Moment rates.}
Let $U$ be uniform on $[0,1]$ and
$X(U)=(U+0.3U^2,\,0.2+0.5U+0.4U^2)$. Twelve directions with angles equally
spaced between $0.06$ and $\pi/2-0.06$ give increasing polynomial quantiles.
We use equal-mass cells for $M=2,4,\ldots,128$, and degree-exact Gaussian
quadrature for all moments. The code directly checks
$\varepsilon_\theta^2=O(M^{-2})$ and $h_\theta=O(M^{-1})$ on this range.
For the fourth directional moment, fitted error slopes over $M=16,32,64,128$
are \TVScalarRawSlope{} without correction and \TVScalarCorrectedSlope{}
with correction; the tensor slopes are \TVTensorRawSlope{} and
\TVTensorCorrectedSlope{}. The plotted theoretical budgets scale as
$M^{-2}$ and $M^{-3}$, respectively. The observed $M^{-4}$ corrected error
is faster than the asserted upper bound, as can occur for smooth cells.

\paragraph{Exact recovery, rank and conditioning.}
For an unbounded three-dimensional Gaussian with prescribed mean and positive
definite covariance, analytic truncated-normal cell integrals include the two
infinite tail cells. With $L=6,12,18$ directions and $M=2,4,8,16,32$, adding
one residual energy per slice recovers the covariance with maximum Frobenius
error \TVCovarianceError{} against the independently specified matrix.
Three spanning directions recover the mean with error \TVMeanError.
The \TVRankChecks{} random-direction checks for tensor orders $1$--$4$
give rank $\min\{L,\binom{d+r-1}{r}\}$. A saved counterexample below the
covariance threshold consists of two positive definite Gaussian covariances
at Frobenius distance $0.5$ with identical variances on five directions up
to rounding. Finally, nearly aligned directions remain full rank but amplify
perturbations; perturbing in the least singular direction reproduces
$\norm{A_2^\dagger}_{2\to F}$. The witness matrices, directions and conditioning
data are saved.

\begin{figure}[!htbp]
\centering
\includegraphics[width=\textwidth]{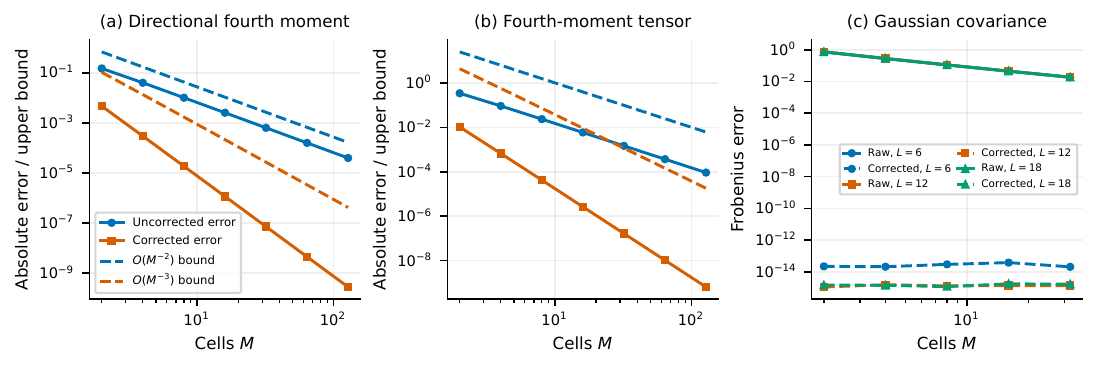}
\caption{Moment checks. Left and middle: fourth-order scalar and tensor errors
with the bounds of \cref{prop:moments}; corrected errors decrease faster than
their $M^{-3}$ bounds in this smooth example. Right: covariance errors before
and after adding residual energies for an unbounded Gaussian.}
\label{fig:theory-moments}
\end{figure}

%% file: results/theory_validation/numbers.tex
\newcommand{\TVIdentityChecks}{\ensuremath{480}}
\newcommand{\TVIdentityError}{\ensuremath{1.29\times10^{-15}}}
\newcommand{\TVMaxViolation}{\ensuremath{4.89\times10^{-16}}}
\newcommand{\TVExpansions}{\ensuremath{142}}
\newcommand{\TVContractions}{\ensuremath{230}}
\newcommand{\TVSlicedChecks}{\ensuremath{48}}
\newcommand{\TVSamplingChecks}{\ensuremath{64}}
\newcommand{\TVSamplingRatio}{\ensuremath{0.364}}
\newcommand{\TVDirectionReplicates}{\ensuremath{8,192}}
\newcommand{\TVDirectionSquaredSlope}{\ensuremath{-0.504}}
\newcommand{\TVDirectionSlope}{\ensuremath{-0.506}}
\newcommand{\TVScalarChecks}{\ensuremath{17,280}}
\newcommand{\TVTensorChecks}{\ensuremath{108}}
\newcommand{\TVMomentIdentityError}{\ensuremath{9.44\times10^{-16}}}
\newcommand{\TVCorrectedMomentRatio}{\ensuremath{0.192}}
\newcommand{\TVTensorRawRatio}{\ensuremath{0.558}}
\newcommand{\TVTensorCorrectedRatio}{\ensuremath{0.0363}}
\newcommand{\TVCovarianceError}{\ensuremath{3.78\times10^{-14}}}
\newcommand{\TVMeanError}{\ensuremath{2.29\times10^{-15}}}
\newcommand{\TVRankChecks}{\ensuremath{144}}
\newcommand{\TVScalarRawSlope}{\ensuremath{-1.999}}
\newcommand{\TVScalarCorrectedSlope}{\ensuremath{-4.000}}
\newcommand{\TVTensorRawSlope}{\ensuremath{-1.999}}
\newcommand{\TVTensorCorrectedSlope}{\ensuremath{-4.000}}
\newcommand{\TVRootResidual}{\ensuremath{1.00\times10^{-13}}}
\newcommand{\TVQuadratureChange}{\ensuremath{4.34\times10^{-12}}}
\newcommand{\TVSolverChange}{\ensuremath{6.56\times10^{-15}}}
\newcommand{\TVSharpM}{\ensuremath{65,536}}
\newcommand{\TVSharpLimit}{\ensuremath{-1.005526}}
\newcommand{\TVSharpRatio}{\ensuremath{143.12}}
\newcommand{\TVDensityConstant}{\ensuremath{1247.6}}
\newcommand{\TVDensityScaledGap}{\ensuremath{0.17901}}
\newcommand{\TVWidthScaled}{\ensuremath{1.1864}}
\newcommand{\TVMassScaled}{\ensuremath{1.3446}}
\newcommand{\TVEqualBallTwoError}{\ensuremath{0.40858}}

%% file: results/theory_validation/rates_table.tex
\begin{tabular}{@{}lccc@{}}
\toprule
Law & $M^2\varepsilon_*^2/C_{\rm Zador}$ & Equal-mass normalization & Rescaled equal-mass error\\
\midrule
Gaussian & 0.99597 & $M\log M$ & 1.07909\\
Ball d=1 & 1.00000 & $M^2$ & 0.33333\\
Ball d=2 & 0.99939 & $M^2$ & 0.40858\\
Ball d=3 & 0.99892 & $M^2/\log M$ & 0.07224\\
Ball d=5 & 0.99827 & $M^{10/6}$ & 0.11563\\
Ball d=9 & 0.99755 & $M^{14/10}$ & 0.03175\\
\bottomrule
\end{tabular}